\documentclass[a4paper,fleqn]{cas-sc}
\usepackage[numbers,square,sort&compress]{natbib}
\usepackage{graphicx}
\usepackage{multicol,multirow}
\usepackage{amsmath,amssymb,amsthm}
\usepackage[utf8]{inputenc}
\usepackage[T1]{fontenc}
\usepackage[australian]{babel}
\selectlanguage{australian}
\usepackage{tabularx}
\usepackage{algorithmicx}
\usepackage{algpseudocode}
\usepackage{subfigure}
\usepackage{url}
\usepackage{ulem}
\usepackage{cancel}

\usepackage{xcolor}

\usepackage{verbatim}

\renewcommand{\geq}{\geqslant}
\renewcommand{\leq}{\leqslant}
\renewcommand{\ge}{\geqslant}
\renewcommand{\le}{\leqslant}

\newcommand{\vect}[1]{{\mathbf{#1}}}

\renewcommand{\emph}{\textit}

\DeclareMathOperator*{\argmin}{arg\,min}

\definecolor{blue2b}{rgb}{0,0.1,0.3}
\definecolor{blue2}{rgb}{0,0.2,0.7}
\definecolor{red2}{rgb}{0.6,0.1,0.0}
\definecolor{green2}{rgb}{0.1,0.4,0.0}
\definecolor{yel2}{rgb}{0.3,0.2,0.0}
\definecolor{purple2}{rgb}{0.5,0.0,0.5}
\definecolor{blue3}{rgb}{0.65,0.85,1.0}
\definecolor{red3}{rgb}{1.0,0.7,0.5}
\definecolor{green3}{rgb}{0.8,1.0,0.7}
\definecolor{yel3}{rgb}{1.0,1.0,0.7}
\definecolor{grey3}{rgb}{0.95,0.95,0.95}
\definecolor{gray3}{rgb}{0.95,0.95,0.95}

\definecolor{grey1}{rgb}{0.30,0.30,0.30}
\definecolor{grey2}{rgb}{0.20,0.20,0.20}
\definecolor{grey0}{rgb}{0,0,0}

\def\tsc#1{\csdef{#1}{\textsc{\lowercase{#1}}\xspace}}
\tsc{WGM}
\tsc{QE}
\tsc{EP}
\tsc{PMS}
\tsc{BEC}
\tsc{DE}

\makeatletter
\newtheoremstyle{definition}
{3ex}%
{3ex}%
{\upshape}%
{}%
{\bfseries}%
{.}%
{.5em}%
{\thmname{#1}\thmnumber{ #2}\thmnote{ (#3)}}
\makeatother

\theoremstyle{definition}
\newtheorem{theorem}{Theorem}
\newtheorem{lemma}[theorem]{Lemma}
\newtheorem{example}[theorem]{Example}

\newtheorem{corollary}[theorem]{Corollary}
\newtheorem{definition}[theorem]{Definition}
\newtheorem{remark}[theorem]{Remark}

\ExplSyntaxOn
\keys_set:nn { stm / mktitle } { nologo }
\ExplSyntaxOff

\begin{document}
\let\WriteBookmarks\relax
\def\floatpagepagefraction{1}
\def\textpagefraction{.001}

\shorttitle{Hierarchical Clustering with OWA-based Linkages}
\title[mode = title]{Hierarchical Clustering with OWA-based Linkages, the~Lance--Williams Formula, and~Dendrogram~Inversions}

\shortauthors{Gagolewski, Cena, James, Beliakov}

\author[1,2,3]{Marek Gagolewski}[orcid=0000-0003-0637-6028]
\cormark[1]
\cortext[cor1]{Corresponding author}
\ead{m.gagolewski@deakin.edu.au}
\ead[url]{https://www.gagolewski.com}
\credit{Conceptualisation, Methodology, Formal analysis, Writing -- Original Draft}

\author[2]{Anna Cena}
\ead{anna.cena@pw.edu.pl}
\credit{Formal analysis, Visualisation, Writing -- Original Draft}  %

\author[1]{Simon James}
\ead{s.james@deakin.edu.au}
\credit{Formal analysis,Writing -- Original Draft}

\author[1]{Gleb Beliakov}
\ead{gleb@deakin.edu.au}
\credit{Formal analysis,Writing -- Original Draft}

\address[1]{Deakin University, School of IT, Geelong, VIC 3220, Australia}

\address[2]{Warsaw University of Technology,
Faculty of Mathematics and Information Science,
ul. Koszykowa 75, 00-662 Warsaw, Poland}

\address[3]{Systems Research Institute, Polish Academy of Sciences,
ul. Newelska 6, 01-447 Warsaw, Poland}

\begin{abstract}
Agglomerative hierarchical clustering based on Ordered Weighted Averaging (OWA) operators not only generalises the single, complete, and average linkages, but also includes intercluster distances based on a few nearest or farthest neighbours, trimmed and winsorised means of pairwise point similarities, amongst many others. We explore the relationships between the famous Lance--Williams update formula and the extended OWA-based linkages with weights generated via infinite coefficient sequences. Furthermore, we provide some conditions for the weight generators to guarantee the resulting dendrograms to be free from unaesthetic inversions.
\end{abstract}

\begin{keywords}
OWA operators\sep
hierarchical clustering\sep
dendrogram\sep
inversion\sep
the Lance--Williams formula
\end{keywords}

\maketitle

\ignorespaces\noindent\textit{%
Please cite this paper as:
Gagolewski M., Cena A., James S., Beliakov G., Hierarchical clustering with OWA-based linkages, the Lance–Williams formula, and dendrogram inversions, {\normalfont Fuzzy Sets and Systems} \textbf{473}, 108740, 2023, DOI:10.1016/j.fss.2023.108740
}

\section{Introduction}

Cluster analysis
(e.g., \cite{wierzchon_klopotek}) is a statistical and machine learning task whose aim is to discover interesting or otherwise useful partitions of a given dataset in a purely unsupervised way.

Hierarchical agglomerative clustering algorithms (e.g., \cite{Mullner2011:fastclusteralg}) allow for partitioning the datasets for which merely a pairwise distance function (e.g., a metric) is defined. Most importantly, the number of clusters is not set in advance – a whole hierarchy of nested partitions can be generated with ease, and then depicted on a tree-like diagram called a \textit{dendrogram}.

Hierarchical agglomerative clustering revolves around one simple idea:
in each step, we merge the pair of  \textit{closest} clusters.
To measure the proximity between two point sets,
the intracluster distance is defined as an extension
of the point-pairwise distance called a \textit{linkage function}.

For instance, in the single linkage approach, the distance between a cluster pair
is given by the distance between the closest pair of points,
one from the first cluster, the other from the second one.
In complete linkage, we take the farthest-away pair.
In average linkage, we compute the arithmetic mean between all
the pairwise distances.

In Section~\ref{sec:generallinkages}, we recall two wide classes of
linkage functions that generalise these three cases.
The first group consists of linkages
generated by the well-known Lance--Williams formula
\cite{LanceWilliams1967:hierarchicalformula,Milligan1979:psycho}.
The second class considers
convex combinations of ordered pairwise distances between clusters,
i.e., the OWA operators (ordered weighted averages \cite{Yager1988:owa}).
Such OWA-based linkages were introduced by Yager in \cite{Yager2000:OWAlinkage}
(see also \cite{NasibovKandemirCavas2011:OWAlinkage} where they were re-invented)
and include many linkage generators that are not covered by the classical
Lance--Williams setting.

From the practical side, their usefulness has been thoroughly
evaluated in \cite{CenaGagolewski2020:genieowa},
where also some further tweaks were proposed to increase the quality
of the generated results, e.g., by including in Genie correction
for cluster size inequality \cite{GagolewskiETAL2016:genie}.

However, OWA-based linkages have not been studied thoroughly
from the theoretical perspective. This paper aims to fill this gap.

In particular, after recalling the basic definitions in
Section~\ref{Sec:hier} and Section~\ref{sec:generallinkages}, Section~\ref{sec:owalw} characterises the relationship between
the Lance--Williams linkage update formula and OWA linkages
with two different weight generators.
Then, Section~\ref{sec:owadendr} gives some conditions
for the linkages to yield clusterings
which can be represented aesthetically on dendrograms (without the
so-called inversions).
Section~\ref{sec:conclusion} concludes the paper.

\section{Hierarchical Agglomerative Clustering}\label{Sec:hier}

Let $\mathbf{X}\in\mathbb{R}^{n\times d}$
be the input data set comprised of $n$ points in $\mathbb{R}^d$,
where $\boldsymbol{x}_i=(x_{i,1}, \dots, x_{i, d})$ for each $i$.
From now on, we assume that $\|\boldsymbol{x}_i-\boldsymbol{x}_j\|$
is the Euclidean distance between two given points.
However, let us note that most of the results
presented below hold for any set of $n$ objects equipped with a semimetric.

A \emph{$k$-partition} of $\mathbf{X}$, $k\ge 1$, is defined as
$\mathcal{C}=\{C_1, \dots, C_{k}\}$, where
$\emptyset\neq C_u\subseteq\mathbf{X}$, $C_u\cap C_v = \emptyset$ for $u\neq v$ and $\bigcup_{u=1}^{k} C_u = \mathbf{X}$.

\bigskip
Here is the most basic scheme in which the general
\emph{agglomerative hierarchical clustering} procedure can be formalised:

\begin{enumerate}
\item[1.] Start with $\mathcal{C}^{(0)} = \{C_{1}^{(0)}, \dots, C_{n}^{(0)}\}$, $C_{i}^{(0)}=\{\boldsymbol{x}_i\}$, where each cluster is a singleton.
\item[2.] For $j=1, \dots., n-1$:
\begin{enumerate}
\item[2.1.] Select clusters to merge:

    $$
    (u, v) = \underset{(u, v),\ u<v}{\argmin}\,d(C_u^{(j-1)}, C_v^{(j-1)});
    $$

\item[2.2.] Merge clusters $C_u^{(j-1)}$ and $C_v^{(j-1)}$, i.e.:
\begin{enumerate}
\item[2.2.1.] $C^{(j)}_{u}=C^{(j-1)}_{u}\cup C^{(j-1)}_{v}$;
\item[2.2.2.] $C^{(j)}_{i}=C^{(j-1)}_{i}$ for $u\neq i < v$;
\item[2.2.3.] $C^{(j)}_{i}=C^{(j-1)}_{i+1}$ for $i > v$.
\end{enumerate}
\end{enumerate}
\end{enumerate}

Thus, an agglomerative hierarchical clustering algorithm forms
a~hierarchy of nested $(n,n-1,\dots,2,1)$-partitions
$\mathcal{C}^*=\left(\mathcal{C}^{(0)}, \mathcal{C}^{(1)}, \allowbreak\dots,
\mathcal{C}^{(n-1)}\right)$.

\bigskip
In the algorithm above, $d:2^\mathbf{X}\times 2^\mathbf{X}\to[0, \infty]$
denotes a chosen \textit{linkage function},
whose aim is to measure the distance between two point clusters,
fulfilling at least $d(U, U)=0$, $d(U, V)=d(V, U)$, and:

\begin{equation*}
d(\{\boldsymbol{x}_i\}, \{\boldsymbol{x}_j\})
=\|\boldsymbol{x}_i-\boldsymbol{x}_j\|
\end{equation*}

\noindent
for any $U,V\subseteq\mathbf{X}$ and
$\boldsymbol{x}_i,\boldsymbol{x}_j\in\mathbf{X}$.

\bigskip
Classical choices of $d$ include (see, e.g., \cite{Mullner2011:fastclusteralg}):

\begin{itemize}
 \item the single linkage:
    \[
    d_{\mathrm{MIN}}(U, V) = \min_{\boldsymbol{u}\in U, \boldsymbol{v}\in V}
    \|\boldsymbol{u}-\boldsymbol{v}\|,
    \]

 \item the complete linkage:
    \[
    d_{\mathrm{MAX}}(U, V) = \max_{\boldsymbol{u}\in U, \boldsymbol{v}\in V}
    \|\boldsymbol{u}-\boldsymbol{v}\|,
    \]

 \item the average linkage (UPGMA; \textit{Unweighted Pair Group Method with
Arithmetic Mean}):
    \[
    d_{\mathrm{AMean}}(U, V) = \frac{1}{|U||V|}\sum_{\boldsymbol{u}\in U,
\boldsymbol{v}\in V} \|\boldsymbol{u}-\boldsymbol{v}\|,
    \]

    \item the Ward linkage:
\begin{eqnarray*}
  d_{\mathrm{Ward}}(U, V) &=&
  \frac{2}{{|U| + |V|}} \sum_{\boldsymbol{u}\in U, \boldsymbol{v}\in V} \|\boldsymbol{u}-\boldsymbol{v}\|^2\\
 &-&  \frac{|V|}{|U| (|U| + |V|)} \sum_{\boldsymbol{u},\boldsymbol{u}'\in U}{\|\boldsymbol{u}-\boldsymbol{u}'\|^2}\\
 &-& \frac{|U|}{|V| (|U| + |V|)}\sum_{\boldsymbol{v},\boldsymbol{v}'\in V} {\|\boldsymbol{v}-\boldsymbol{v}'\|^2},
\end{eqnarray*}

\item the centroid linkage (UPGMC; \textit{Unweighted Pair
Group Method Centroid}):
\[
d_{\mathrm{Cent}}(U, V) =\|\boldsymbol{\mu}_u-\boldsymbol{\mu}_v\|,
\]

\noindent
where $\boldsymbol{\mu}_u$, $\boldsymbol{\mu}_v$ are the respective
clusters' centroids (componentwise arithmetic means),

\item weighted average linkage (WPGMA; \textit{Weighted Pair Group Method with Arithmetic Mean}):
\[
d_{\mathrm{WAMean}}(U, V) =
\frac{|W|}{|W| + |Z|}d_{\mathrm{WAMean}}(U, W) +
\frac{|Z|}{|W|+|Z|}d_{\mathrm{WAMean}}(U, Z),
\]

\noindent
assuming that $V=W\cup Z$ in one of the previous iterations,

\item the median linkage (WPGMC;~\textit{Weighted Pair
Group Method Centroid}) given by:
\[
d_{\mathrm{Median}}(U, V) = \frac{1}{2}d_{\mathrm{Median}}(U, W)+\frac{1}{2}d_{\mathrm{Median}}(U, Z)-\frac{1}{4}d_{\mathrm{Median}}(W, Z),
\]

\noindent
assuming that $V=W\cup Z$ in one of the previous iterations.
\end{itemize}

Some of the above cases can be generalised through the Lance--Williams
formula \cite{LanceWilliams1967:hierarchicalformula,Milligan1979:psycho}
or the OWA-based linkages \cite{Yager2000:OWAlinkage},
which we shall discuss next.

\section{Linkage Classes}\label{sec:generallinkages}

Let us recall two noteworthy linkage classes, based respectively on
the Lance--Williams formula and the OWA operator-based intercluster
distances.

\subsection{Lance--Williams Linkages}

In \cite{LanceWilliams1967:hierarchicalformula},
G.~Lance and~W.~Williams proposed
an iterative formula that generalises many common linkages
and allows for a fast update of the intercluster distances
after each cluster pair merge.

Assuming that in the $j$-th step of the procedure
we are about to merge $C_u^{(j-1)}$ and $C_v^{(j-1)}$,
then for every other (intact) cluster $C_z^{(j)}=C_z^{(j-1)}$, $z\not\in\{u,v\}$,
the new distances are:
\begin{eqnarray}\label{Eq:LanceWilliams}
d\Big(C_z^{(j)}, C_u^{(j-1)}\cup C_v^{(j-1)}\Big)\nonumber
&=&\alpha_u\, d(C_z^{(j-1)}, C_u^{(j-1)})\\ \nonumber
&+&\alpha_v\, d(C_z^{(j-1)}, C_v^{(j-1)})\\ \nonumber
&+&\beta\phantom{{}_u}\, d(C_u^{(j-1)}, C_v^{(j-1)})\\
&+&\gamma\phantom{{}_u}\,\Big|d(C_z^{(j-1)}, C_u^{(j-1)}) -
d(C_z^{(j-1)}, C_v^{(j-1)})\Big|,
\end{eqnarray}

\noindent
for some $\alpha_u$, $\alpha_v$, $\beta$, and $\gamma$
that might depend on $n_u=|C_u^{(j-1)}|, n_v=|C_v^{(j-1)}|,$ and
$n_z=|C_z^{(j)}|$.

\begin{table}[h!]
\centering\caption{Common coefficients in the Lance--Williams formula
\cite{LanceWilliams1967:hierarchicalformula,Milligan1979:psycho}}\label{Tab:LanceWilliams}
\begin{tabular}{|l|c|c|c|c|}
\hline
{linkage} & $\alpha_u$ & $\alpha_v$
                 & $\beta$ & $\gamma$\\
\hline
\hline
single & $\displaystyle\frac{1}{2}$ & $\displaystyle\frac{1}{2}$ & 0 & $-\displaystyle\frac{1}{2}$ \\[2ex]
complete & $\displaystyle\frac{1}{2}$ & $\displaystyle\frac{1}{2}$ & 0 & $+\displaystyle\frac{1}{2}$ \\[2ex]
average (UPGMA)  & $\displaystyle\frac{n_{u}}{n_{u}+n_{v}}$ & $\displaystyle\frac{n_{v}}{n_{u}+n_{v}}$ & 0 & 0 \\[2ex]
weighted average (WPGMA)  & $\displaystyle\frac{1}{2}$ & $\displaystyle\frac{1}{2}$ & 0 & 0 \\[2ex]
centroid (UPGMC)  & $\displaystyle\frac{n_{u}}{n_{u}+n_{v}}$ & $\displaystyle\frac{n_{v}}{n_{u}+n_{v}}$
                             & $-\displaystyle\frac{n_{u} n_{v}}{(n_{u}+n_{v})^{2}}$ & 0 \\[2ex]
median (WPGMC)  & $\displaystyle\frac{1}{2}$ & $\displaystyle\frac{1}{2}$ & $-\displaystyle\frac{1}{4}$ & 0 \\[2ex]
Ward  & $\displaystyle\frac{n_{u}+n_{z}}{n_{u}+n_{j}+n_{z}}$ & $\displaystyle\frac{n_{v}+n_{z}}{n_{u}+n_{v}+n_{z}}$
                             & $-\displaystyle\frac{n_{z}}{n_{u}+n_{v}+n_{z}}$ & 0 \\[2ex]
\hline
\end{tabular}
\end{table}

Table \ref{Tab:LanceWilliams} gives some common choices of the above coefficients.

Note that the Lance--Williams formula only utilises the information about
$d(C_u^{(j-1)}, C_v^{(j-1)})$,
$d(C_z^{(j-1)}, C_u^{(j-1)})$, and
$d(C_z^{(j-1)}, C_v^{(j-1)})$,
as well as the cardinalities of the clusters.
Further, as we would like the linkage to be symmetric,
it is required that $\alpha_u(n_u, n_v, n_z)=\alpha_v(n_v, n_u, n_z)$,
$\beta(n_u, n_v, n_z)=\beta(n_v, n_u, n_z)$, and
$\gamma(n_u, n_v, n_z)=\gamma(n_v, n_u, n_z)$.

\subsection{OWA-based linkages}

OWA operators, i.e., convex combinations of order statistics,
were introduced in the aggregation and decision making
context by Yager in \cite{Yager1988:owa}.
Let us introduce their version that acts on element sequences
of any length.

\begin{definition}
An \emph{extended} \cite{MayorCalvo1997:eaf,CalvoETAL2000:generationwtri}
\emph{OWA operator} is defined as:
\[
\mathrm{OWA}_{\triangle}(d_1, d_2, \dots, d_m)=\sum_{i=1}^{m} c_{i,m} d_{(i)}
\]

\noindent
where a given {weighting triangle} is
$\triangle=(c_{i,m}\in[0,1],\,m\in\mathbb{N}, i=1,\dots, m:
(\forall m) \sum_{i=1}^{m} c_{i, m}=1)$,
and $d_{(1)} \ge d_{(2)} \ge \dots \ge d_{(m)}$.
\end{definition}

By definition, $\mathrm{OWA}_{\triangle}$ is symmetric,
idempotent, and nondecreasing in each variable;
compare \cite{GrabischETAL2009:aggregationfunctions,BeliakovBustinceCalvo2016,%
Gagolewski2015:datafusionbook}.
Hence, it is also internal, i.e.,
$d_{(1)}\ge\mathrm{OWA}_{\triangle}(d_1,\dots,d_m)\ge d_{(m)}$
for all possible inputs $d_1,\dots,d_m$ of any cardinality.

\bigskip
In the clustering context,
OWA-based linkage was proposed by Yager in \cite{Yager2000:OWAlinkage};
see also \cite{NasibovKandemirCavas2011:OWAlinkage,CenaGagolewski2020:genieowa}.

\begin{definition}
For a given weighting triangle $\triangle$,
the \emph{$\mathrm{OWA}_{\triangle}$-based linkage} is defined as:
\[
d_\triangle(C_u, C_v) = \mathrm{OWA}_{\triangle}(\{
    \|\boldsymbol{u}-\boldsymbol{v}\|:
    \boldsymbol{u}\in C_u, \boldsymbol{v}\in C_v
\})
\]

\noindent
for any point sets $C_u$ and $C_v$,
i.e., it is the OWA operator applied on all the $m=|C_u| |C_v|$
pairwise distances between the two sets.
\end{definition}

\begin{remark}
In particular, for weights fulfilling:

\begin{itemize}
\item $c_{i,m} = \frac{1}{m}$: we obtain the average linkage,
\item $c_{m,m} = 1$ and $c_{i,m}=0$ for $i<m$: we get the single
linkage,
\item $c_{1,m} = 1$ and $c_{i,m}=0$ for  $i>1$: we enjoy the complete linkage.
\end{itemize}

Note that numerous new linkages that do not fit the Lance--Williams formula
are now possible. This includes, e.g., distance aggregates that correspond to the median and any other quantiles, trimmed and winsorised means, the arithmetic mean of a few first smallest observations, a fuzzified/smoothed minimum, and so forth; see \cite{CenaGagolewski2020:genieowa} for many example classes.
\end{remark}

There are a few generic ways to generate
the weighting triangles known in the literature; see \cite{JamisonETAL1965:convwave,Yager1988:owa}
and \cite{GomezETAL:agopmissing2014,BeliakovJames2013:stability}.
In this paper, we will be interested in studying:

\begin{itemize}
 \item $c_{i, m}=\frac{c_i}{\sum_{j=1}^m c_j}$, where
 $(c_1,c_2,\dots)$ is such that $c_i\ge 0$ for all $i=1,2,\dots$ and $c_1=1$,
 \item $c_{m-i+1, m}=\frac{c_{i}}{\sum_{j=1}^m c_j}$, where
 $(c_1,c_2,\dots)$ is such that $c_i\ge 0$ for all $i=1,2,\dots$ and $c_1=1$.
\end{itemize}

\noindent
The assumption that $c_1=1$ is with no loss in generality
provided that we disallow the ill-defined case $c_1=0$,
where we could have utilised the convention $0/0=1$. We do not enable
it to keep the presentation simple.

Hence, for a given coefficients vector $\boldsymbol{c}=(c_1,c_2,\dots)$
with $c_1=1$ and $c_2,c_3,\dots,\ge 0$,
we can define two extended OWA operators:
\begin{equation}
\mathrm{OWA}_{\boldsymbol{c}}(d_1,\dots,d_m)  = \frac{\sum_{i=1}^m c_i d_{(i)}}{\sum_{i=1}^m c_i},
\end{equation}

\begin{equation}
\mathrm{OWA}'_{\boldsymbol{c}}(d_1,\dots,d_m)  = \frac{\sum_{i=1}^m c_i d_{(m-i+1)}}{\sum_{i=1}^m c_i},
\end{equation}

\noindent
for any $m$ and $d_1,\dots,d_m$,
where $d_{(i)}$ denotes the $i$-th greatest value in the sequence.

\section{OWA linkages and the Lance--Williams formula}\label{sec:owalw}

In this section, we characterise which OWA-based linkages
(defined via $\mathrm{OWA}_{\boldsymbol{c}}$ and
$\mathrm{OWA}'_{\boldsymbol{c}}$) can be expressed by the Lance--Williams
formula, and vice versa.
It turns out that under the two assumed weight generation schemes,
these only include the single, average, and complete linkages.

\begin{theorem}\label{Thm:lwowa1}
Assume that $d$ is generated by the Lance--Williams formula,
i.e., for any $C_u, C_v, C_z$ it holds:
\begin{eqnarray*}
d\Big(C_z, C_u\cup C_v\Big)
&=&\alpha_u\,d(C_z, C_u)  \\
&+&\alpha_v\,d(C_z, C_v) \\
&+&\beta\phantom{{}_{u}}\,d(C_u, C_v) \\
&+&\gamma\phantom{{}_{u}}\,\Big|d(C_z, C_u) - d(C_z, C_v)\Big|,
\end{eqnarray*}

\noindent
with
$\alpha_u(|C_u|, |C_v|, |C_z|)=\alpha_v(|C_v|, |C_u|, |C_z|)$,
$\beta(|C_u|, |C_v|, |C_z|)=\beta(|C_v|, |C_u|, |C_z|)$, and
$\gamma(|C_u|, |C_v|, |C_z|)=\gamma(|C_v|, |C_u|, |C_z|)$.
Then there exists $\boldsymbol{c}=(c_1,c_2,\dots)$ with $c_1=1$
and $c_2,c_3\ge 0$ such that for every $C_u, C_v$ it holds:
\[
d(C_u, C_v) = \mathrm{OWA}_{\boldsymbol{c}}(\{ \|\boldsymbol{u}-\boldsymbol{v}\|: \boldsymbol{u}\in C_u, \boldsymbol{v}\in C_v \})
\]

\noindent
if and only if $\boldsymbol{c}$ is either:

\begin{itemize}
\item $\boldsymbol{c}=(1, 0, 0, \dots)$ (complete linkage -- the maximum) or
\item $\boldsymbol{c}=(1, 1, 1, \dots)$ (average linkage -- the arithmetic mean).
\end{itemize}
\end{theorem}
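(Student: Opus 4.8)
The plan is to treat the two implications separately; sufficiency is essentially a table lookup, while all the work sits in necessity. Throughout I will exploit that the stated identity is required to hold for \emph{every} triple of clusters, so I am free to prescribe the point configuration and hence the realised distances (placing points in a Euclidean space of sufficiently high dimension realises any admissible collection of distances).

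\textbf{Sufficiency.} For $\boldsymbol{c}=(1,0,0,\dots)$ the operator $\mathrm{OWA}_{\boldsymbol{c}}$ returns $d_{(1)}$, the largest cross-distance, so $d_{\boldsymbol{c}}$ is the complete linkage; for $\boldsymbol{c}=(1,1,1,\dots)$ it returns the arithmetic mean, so $d_{\boldsymbol{c}}$ is the average (UPGMA) linkage. Both appear in Table~\ref{Tab:LanceWilliams} and therefore satisfy~\eqref{Eq:LanceWilliams}.

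\textbf{Necessity, Step 1 ($\beta\equiv 0$).} Assume $d=\mathrm{OWA}_{\boldsymbol{c}}$ obeys the Lance--Williams identity and take $C_z=\{\boldsymbol{z}\}$. The value $d(C_z,C_u\cup C_v)$ is an $\mathrm{OWA}_{\boldsymbol{c}}$ of the cross-distances from $\boldsymbol{z}$ to $C_u\cup C_v$ and hence is independent of the within-$(C_u\cup C_v)$ distances, whereas the term $\beta\,d(C_u,C_v)$ depends on them. Rotating the points of $C_u\cup C_v$ on the spheres centred at $\boldsymbol{z}$ changes $d(C_u,C_v)$ while leaving all cross-distances — and thus the remaining three Lance--Williams terms — fixed. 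Consistency forces $\beta=0$ on the relevant cardinalities.

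\textbf{Necessity, Step 2 (geometric shape).} Fix $C_z=\{\boldsymbol{z}\}$, $C_u=\{\boldsymbol{p}\}$, $C_v=Q$ with $|Q|=k$, write $a=\|\boldsymbol{z}-\boldsymbol{p}\|$, let $b_1\ge\cdots\ge b_k$ be the sorted distances from $\boldsymbol{z}$ to $Q$, and put $S_k=\sum_{i=1}^k c_i$. Choosing $a\ge b_1$, the left-hand side equals $\big(a+\sum_{i=1}^k c_{i+1}b_i\big)/S_{k+1}$, while the right-hand side (with $\beta=0$, and $a$ above the internal mean $\sum_i c_i b_i/S_k$) equals $(\alpha_u+\gamma)\,a+(\alpha_v-\gamma)\sum_i c_i b_i/S_k$. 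As $a,b_1,\dots,b_k$ range over the ordered cone, matching coefficients gives $\alpha_u+\gamma=1/S_{k+1}$ and $c_{i+1}/c_i=(\alpha_v-\gamma)S_{k+1}/S_k$, a ratio independent of $i$; evaluating at $i=1$ identifies it with $c_2$, so applying this with $k\ge i$ for each $i$ yields $c_i=c_2^{\,i-1}$. Thus $\boldsymbol{c}$ is geometric with ratio $r:=c_2\ge 0$, and $\alpha_v-\gamma=rS_k/S_{k+1}$.

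\textbf{Necessity, Step 3 (only $r\in\{0,1\}$).} The coefficients $\alpha_u,\alpha_v,\gamma$ depend only on the cardinalities, so the values found above for $(n_u,n_v,n_z)=(1,2,1)$ persist under any repositioning. Take $k=2$ and place $a$ strictly between the sorted distances, $b_1>a>b_2$, and above the internal mean $(b_1+rb_2)/(1+r)$; then $a$ occupies the second order-statistic slot and a short computation gives $\mathrm{OWA}_{\boldsymbol{c}}-(\text{Lance--Williams})=(b_1-a)(1-r)/S_3$. Since $b_1>a$, agreement forces $r=1$ whenever such an $a$ exists; it exists precisely when the internal mean lies strictly below $b_1$, i.e.\ when $r>0$, whereas for $r=0$ the mean equals $b_1$ and the sub-case is vacuous. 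Hence $r\in\{0,1\}$, giving $\boldsymbol{c}=(1,0,0,\dots)$ or $\boldsymbol{c}=(1,1,1,\dots)$.

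The main obstacle is the careful bookkeeping in Steps 2--3: one must confirm each chosen configuration is realisable as Euclidean distances and correctly resolve the sign of the absolute-value term, so that the order-statistic slot occupied by $a$ is known \emph{before} matching coefficients; the vacuity of the middle sub-case at $r=0$ is exactly what keeps the complete linkage in the conclusion.
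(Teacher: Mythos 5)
Your proof is correct, and while it shares the skeleton of the paper's argument --- sufficiency by inspection, then $\beta=0$, then reduction to configurations with a singleton $C_z$ and singleton $C_u$, with the absolute-value term resolved so the identity reads $(\alpha_u+\gamma)\,d(C_z,C_u)+(\alpha_v-\gamma)\,d(C_z,C_v)$ --- the two closing mechanisms are genuinely different. The paper plugs in degenerate $0/1$-valued test vectors (e.g.\ $u_1=v_1=1$, all other distances $0$): one family gives $c_2=\sum_{i=n+1}^{2n}c_i/\sum_{i=1}^{n}c_i$, another gives $c_m=c_2^{m-1}$, and combining the two at $n=2$ yields $1+c_2=c_2+c_2^2$, hence $c_2=1$. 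You instead match coefficients of two linear forms on the full-dimensional cone $a>b_1>\dots>b_k>0$, which delivers the geometric structure $c_i=c_2^{i-1}$ \emph{together with} the explicit values $\alpha_u+\gamma=1/S_{k+1}$ and $\alpha_v-\gamma=c_2S_k/S_{k+1}$ (where $S_k=\sum_{i=1}^k c_i$), and you then eliminate every ratio $r\notin\{0,1\}$ with an interleaving configuration $b_1>a>b_2$, where the merged OWA re-sorts the concatenated distances and the discrepancy $(b_1-a)(1-r)/S_3$ appears; the paper never uses such an interleaving, since all its test vectors sort trivially. Your route buys two things: the coefficient-matching step is more systematic than ad hoc test vectors, and all of your configurations consist of strictly positive, pairwise distinct distances, hence are honestly realisable by distinct Euclidean points, whereas the paper's vectors with many zero distances strictly require a semimetric reading or a continuity/limiting argument. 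The paper's route is shorter on the page and additionally records $\alpha_u+\alpha_v=1$ via internality, a fact you never need. One cosmetic caution: state the recurrence as $c_{i+1}=\rho\,c_i$ rather than as the ratio $c_{i+1}/c_i=\rho$, so that the case $c_i=0$ is covered without division; the conclusion $c_i=c_2^{i-1}$ is unaffected.
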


\begin{proof}
That these conditions are sufficient is evident.

Hence, assume that $d$ is generated by the Lance--Williams formula
and that:
\[
d\Big(C_z, C_u\cup C_v\Big) = \frac{\sum_{i=1}^{n+m} c_i w_{(i)}}{\sum_{i=1}^{n+m} c_i}
\]

\noindent
for $\boldsymbol{w}=(u_1,\dots,u_n,v_1,\dots,v_m)$,
where $u_i$ is the distance between an $i$-th pair
of points in $C_u\times C_z$ and $v_j$ is the distance
between a $j$-th pair of points in $C_v\times C_z$.
Due to the symmetry of OWAs, it does not matter how we enumerate the pairs,
hence we assume $u_1\ge\dots\ge u_n$ and $v_1\ge\dots\ge v_m$,
where
$n=|C_u| |C_z|$, $m=|C_v| |C_z|$. %

It is apparent that $d\Big(C_z, C_u\cup C_v\Big)$ cannot depend on
$d(C_u, C_v)$, and hence it necessarily holds that $\beta=0$.
Therefore, let us note that if $\mathrm{OWA}_{\boldsymbol{c}}(u_1,\dots,u_n)\ge
\mathrm{OWA}_{\boldsymbol{c}}(v_1,\allowbreak\dots,v_m)$, then we have:
\[
\mathrm{OWA}_{\boldsymbol{c}}(u_1,\dots,u_n,v_1,\dots,v_m)
= (\alpha_u+\gamma) \mathrm{OWA}_{\boldsymbol{c}}(u_1,\dots,u_n)
+ (\alpha_v-\gamma) \mathrm{OWA}_{\boldsymbol{c}}(v_1,\dots,v_m).
\]

As mentioned earlier, each OWA operator is internal. Hence, it necessarily holds
$\alpha_u+\alpha_v=1$, a condition which we obtain
by considering $n=m=1$ and $u_1=v_1=1$, because it yields:
\[
\frac{c_1 \cdot 1 + c_2 \cdot 1}{c_1+c_2} = (\alpha_u+\gamma)\cdot 1 + (\alpha_v-\gamma)\cdot 1.
\]
From now on assume that $\boldsymbol{c}\neq(1,0,0,\dots)$ (i.e.,
it is not the complete linkage),
i.e., $\alpha_u+\gamma<1$ and $\alpha_v-\gamma>0$.

In the case where $n=m$ and
$u_1=v_1=1$ and $u_2=v_2=u_3=v_3=\dots=0$
we have:
\[
\frac{ (c_1+c_2)\cdot 1 }{\sum_{i=1}^{2n} c_i}
=
\frac{c_1\cdot 1}{\sum_{i=1}^{n} c_i}.
\]

\noindent
Under the assumption that $c_1=1$,
this implies:
\[
c_2  = \frac{\sum_{i=n+1}^{2n} c_i}{\sum_{i=1}^{n} c_i}
\]

\noindent
for all $n\ge 1$.

Now consider the case $n=1$, $m>1$ with $u_1\ge v_1\ge v_2\ge \dots$.
If $u_1=1$ and $v_1=v_2=\dots=0$, this breeds:
\[
\frac{c_1\cdot 1}{\sum_{i=1}^{m+1} c_i} = (\alpha_u+\gamma)\cdot 1.
\]

\noindent
Therefore, $(\alpha_v-\gamma)
=\frac{\sum_{i=2}^{m+1} c_i}{\sum_{i=1}^{m+1} c_i}$.
If $u_1=v_1=1$ and $v_2=v_3=\dots=0$,
then:
\[
\frac{c_1\cdot 1+c_2\cdot 1+c_3\cdot 0 + \dots + c_{m+1}\cdot 0}{\sum_{i=1}^{m+1} c_i} =
\frac{c_1}{\sum_{i=1}^{m+1} c_i} \cdot 1
+\frac{\sum_{i=2}^{m+1} c_i}{\sum_{i=1}^{m+1} c_i} \frac{c_1\cdot 1 + c_2\cdot 0 + \dots + c_t\cdot 0}{\sum_{i=1}^{m} c_i}.
\]

\noindent
Thus,
\[
c_2 = c_1 \frac{\sum_{i=2}^{m+1} c_i}{\sum_{i=1}^{m} c_i}.
\]

\noindent
Assuming $c_1=1$ and $m=2$, this yields:
\[
c_3=c_2^2.
\]

\noindent
With $c_1=1$ and studying further $m>2$, we get
each time that $c_m=c_2^{m-1}$.
But from the previous equation we have that
$c_2 = \frac{c_3+c_4}{c_1+c_2} = \frac{c_2^2 + c_2^3}{1+c_2}$,
Hence, $1+c_2 = c_2+c_2^2$ and thus $c_2=1$,
which implies that necessarily for all $i$ it must hold $c_i=1$,
which corresponds to the average linkage.

As we have already considered the complete linkage case separately,
the proof is complete.
\end{proof}

\begin{theorem}
Assume that $d$ is generated by the Lance--Williams formula,
just like above.
Then there exists $\boldsymbol{c}=(c_1,c_2,\dots)$ with $c_1=1$
and $c_2,c_3\ge 0$ such that for every $C_u, C_v$ we have:
\[
d(C_u, C_v) = \mathrm{OWA}'_{\boldsymbol{c}}(\{ \|\boldsymbol{u}-\boldsymbol{v}\|: \boldsymbol{u}\in C_u, \boldsymbol{v}\in C_v \})
\]

\noindent
if and only if $\boldsymbol{c}$ is either:
\begin{itemize}
\item $\boldsymbol{c}=(1, 0, 0, \dots)$ (single linkage -- the minimum) or
\item $\boldsymbol{c}=(1, 1, 1, \dots)$ (average linkage -- the arithmetic mean).
\end{itemize}
\end{theorem}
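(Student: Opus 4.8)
The plan is to exploit the order-reversal duality between $\mathrm{OWA}'_{\boldsymbol{c}}$ and $\mathrm{OWA}_{\boldsymbol{c}}$ so as to reduce the claim to Theorem~\ref{Thm:lwowa1}, rather than repeating the whole computation. Sufficiency is again immediate: $\boldsymbol{c}=(1,1,1,\dots)$ produces the arithmetic mean (hence the average linkage, exactly as before), while $\boldsymbol{c}=(1,0,0,\dots)$ leaves all the weight on the smallest order statistic $d_{(m)}$, i.e.\ the minimum, which is the single linkage; both are classically Lance--Williams. So the work lies in necessity.

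For necessity, I would first observe that the two opening reductions of Theorem~\ref{Thm:lwowa1} are insensitive to the direction in which the distances are sorted and therefore carry over verbatim. Writing $u_1,\dots,u_n$ and $v_1,\dots,v_m$ for the $C_z$--$C_u$ and $C_z$--$C_v$ distances, the left-hand side $\mathrm{OWA}'_{\boldsymbol{c}}(u_1,\dots,u_n,v_1,\dots,v_m)$ cannot depend on $d(C_u,C_v)$, forcing $\beta=0$; and internality applied to $n=m=1$, $u_1=v_1=1$ gives $\alpha_u+\alpha_v=1$ in the same manner. Hence, whenever $\mathrm{OWA}'_{\boldsymbol{c}}(u)\ge\mathrm{OWA}'_{\boldsymbol{c}}(v)$, the recursion collapses to
\[
\mathrm{OWA}'_{\boldsymbol{c}}(u,v)=(\alpha_u+\gamma)\,\mathrm{OWA}'_{\boldsymbol{c}}(u)+(\alpha_v-\gamma)\,\mathrm{OWA}'_{\boldsymbol{c}}(v),
\]
which is structurally identical to the equation driving Theorem~\ref{Thm:lwowa1}.

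The key step is then the reflection identity: for any list $d_1,\dots,d_m$ and any $K\ge\max_i d_i$,
\[
\mathrm{OWA}'_{\boldsymbol{c}}(d_1,\dots,d_m)=K-\mathrm{OWA}_{\boldsymbol{c}}(K-d_1,\dots,K-d_m),
\]
since subtracting from $K$ reverses the sorting order and sends $d_{(m-i+1)}$ to the $i$-th largest of the reflected values. Substituting this into the collapsed recursion, using $\alpha_u+\alpha_v=1$ to cancel the additive constant and $|(K-a)-(K-b)|=|a-b|$ to flip the sign of the $\gamma$-term, turns it into precisely the Lance--Williams functional equation for $\mathrm{OWA}_{\boldsymbol{c}}$ with coefficients $(\alpha_u,\alpha_v,0,-\gamma)$, now evaluated at the reflected distances. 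Theorem~\ref{Thm:lwowa1} therefore applies and forces $\boldsymbol{c}\in\{(1,0,0,\dots),(1,1,1,\dots)\}$; and since reflection carries the maximum to the minimum, the case $(1,0,0,\dots)$ that gave complete linkage there is exactly the single linkage here.

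The main obstacle I anticipate is making the reflection rigorous at the level of admissible inputs: one must verify that the additive constant $K(1-\alpha_u-\alpha_v-\beta)$ genuinely vanishes (which is why $\beta=0$ and $\alpha_u+\alpha_v=1$ must be secured first), track the sign reversal of $\gamma$ honestly, and confirm that the reflected configurations $K-u_i,\,K-v_j$ are of the kind to which Theorem~\ref{Thm:lwowa1} may legitimately be applied. Should this bookkeeping prove awkward, the fallback is to mirror the proof of Theorem~\ref{Thm:lwowa1} directly: repeat its substitutions with the sorting reversed --- isolating the smallest distance by setting one value to $0$ and the rest equal --- to obtain the analogous recurrences that again force $c_2=1$ and hence $c_i=1$ for all $i$. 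This is the self-contained but more computational route, and I would prefer the duality argument precisely because it localises all the novelty into the single reflection identity.
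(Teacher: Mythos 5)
Your proposal is correct, but it follows a genuinely different route from the paper's. The paper proves this theorem by mirroring the computation of Theorem~\ref{Thm:lwowa1} with the sorting reversed: it plugs in the order-reversed test configurations (e.g., $u_1=\dots=u_{n-1}=v_1=\dots=v_{m-1}=1$, $u_n=v_m=0$, and then $n>1$, $m=1$ with $u_1=1$, $u_2=\dots=u_n=v_1=0$) to obtain the recurrences $c_2=\sum_{i=n+1}^{2n}c_i/\sum_{i=1}^{n}c_i$ and $c_{n+1}=c_n\sum_{i=2}^{n+1}c_i/\sum_{i=1}^{n}c_i$, from which $c_i=1$ for all $i$ follows as before. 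You instead reduce the whole statement to Theorem~\ref{Thm:lwowa1} via the reflection identity $\mathrm{OWA}'_{\boldsymbol{c}}(d_1,\dots,d_m)=K-\mathrm{OWA}_{\boldsymbol{c}}(K-d_1,\dots,K-d_m)$, which is valid (it relies on the effective weights summing to one), and your bookkeeping is right: $\beta=0$ and $\alpha_u+\alpha_v=1$ carry over verbatim, the additive constant cancels exactly because of these two facts, the $\gamma$-term changes sign, and the same coefficient vector $(1,0,0,\dots)$ is read as complete linkage under $\mathrm{OWA}_{\boldsymbol{c}}$ but as single linkage under $\mathrm{OWA}'_{\boldsymbol{c}}$. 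The one point to be careful about is the one you already flagged: invoking Theorem~\ref{Thm:lwowa1} as a black box requires the reflected values $K-u_i$, $K-v_j$ to be admissible intercluster distances; since $K-\|\cdot\|$ is not a semimetric, this is legitimate only under the convention --- which the paper itself uses implicitly, e.g., when it sets some pairwise distances to $0$ and others to $1$ --- that the functional equation holds for arbitrary nonnegative symmetric assignments, in which case every nonnegative sequence arises as a reflection for suitable $K$ and the reduction goes through. What your approach buys is economy and structure: all the novelty sits in one identity, no computation is repeated, and the correspondence complete~$\leftrightarrow$~single between the two theorems is explained rather than re-derived; what the paper's direct route buys is self-containedness and immunity to the admissibility subtlety, at the cost of redoing essentially the same algebra. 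Your fallback plan is precisely the paper's proof.
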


\begin{proof}
Sufficiency of the above is obvious. The reasoning
required to show the necessary part is very similar to the one we
have conveyed in the proof of Theorem~\ref{Thm:lwowa1}.

As it necessarily holds that $\beta=0$,  %
let us note that if
$\mathrm{OWA}'_{\boldsymbol{c}}(u_1,\dots,u_n)\ge
\mathrm{OWA}'_{\boldsymbol{c}}(v_1,\allowbreak\dots,v_m)$, then we have:
\[
\mathrm{OWA}'_{\boldsymbol{c}}(u_1,\dots,u_n,v_1,\dots,v_m)
= (\alpha_u+\gamma) \mathrm{OWA}'_{\boldsymbol{c}}(u_1,\dots,u_n)
+ (\alpha_v-\gamma) \mathrm{OWA}'_{\boldsymbol{c}}(v_1,\dots,v_m),
\]

\noindent
for some $\alpha_u+\alpha_v=1$.  %

From now on assume that $\boldsymbol{c}\not\in(1,0,0,\dots)$ (i.e.,
it is not the single linkage),
i.e., $\alpha_u+\gamma>0$ and $\alpha_v-\gamma<1$.  %

Consider the case where $n=m$ and
$u_1=v_1=u_2=v_2=\dots=u_{n-1}=v_{m-1}=1$ and $u_n=v_m=0$.
This implies:
\[
c_2 = \frac{\sum_{i={n+1}}^{2n} c_i}{\sum_{i=1}^n c_i}
\]

\noindent
for all $n\ge 1$.

Next, we study $n>1$ and $m=1$ with $u_1\ge\dots\ge u_n\ge v_1$.
If $v_1=0$ and $u_1=\dots=u_n=1$,
then we obtain:
\[
\frac{\sum_{i=2}^{n+1} c_i}{\sum_{i=1}^{n+1} c_i} = (\alpha_u+\gamma)\cdot 1.
\]

\noindent
Therefore,
$(\alpha_v-\gamma)=
\frac{c_1}{\sum_{i=1}^{n+1} c_i}.
$
If $u_1=1$ and $u_2=\dots=u_n=v_1=0$,
then:
\[
c_{n+1} = c_n \frac{\sum_{i=2}^{n+1} c_i}{\sum_{i=1}^{n} c_i},
\]

\noindent
which, similarly as in the previous proof, implies that
$c_i=1$ for all $i$ (average linkage), QED.
\end{proof}

\section{Inversion-free dendrograms}\label{sec:owadendr}

We can depict a~hierarchy of nested partitions, $\mathcal{C}^*=\{\mathcal{C}^{(0)}, \mathcal{C}^{(1)}, \allowbreak\dots, \mathcal{C}^{(n-1)}\}$, on a \textit{dendrogram}, which is an undirected tree whose leaves represent the initial singleton clusters $C_1^{(0)},\dots,\allowbreak C_n^{(0)}$. The root corresponds to $C_1^{(n-1)}=\mathbf{X}$, and the inside nodes depict the clusters which are merged in each step of the procedure.

\begin{figure}[t!]
\centering
\includegraphics[width=1.0\linewidth]{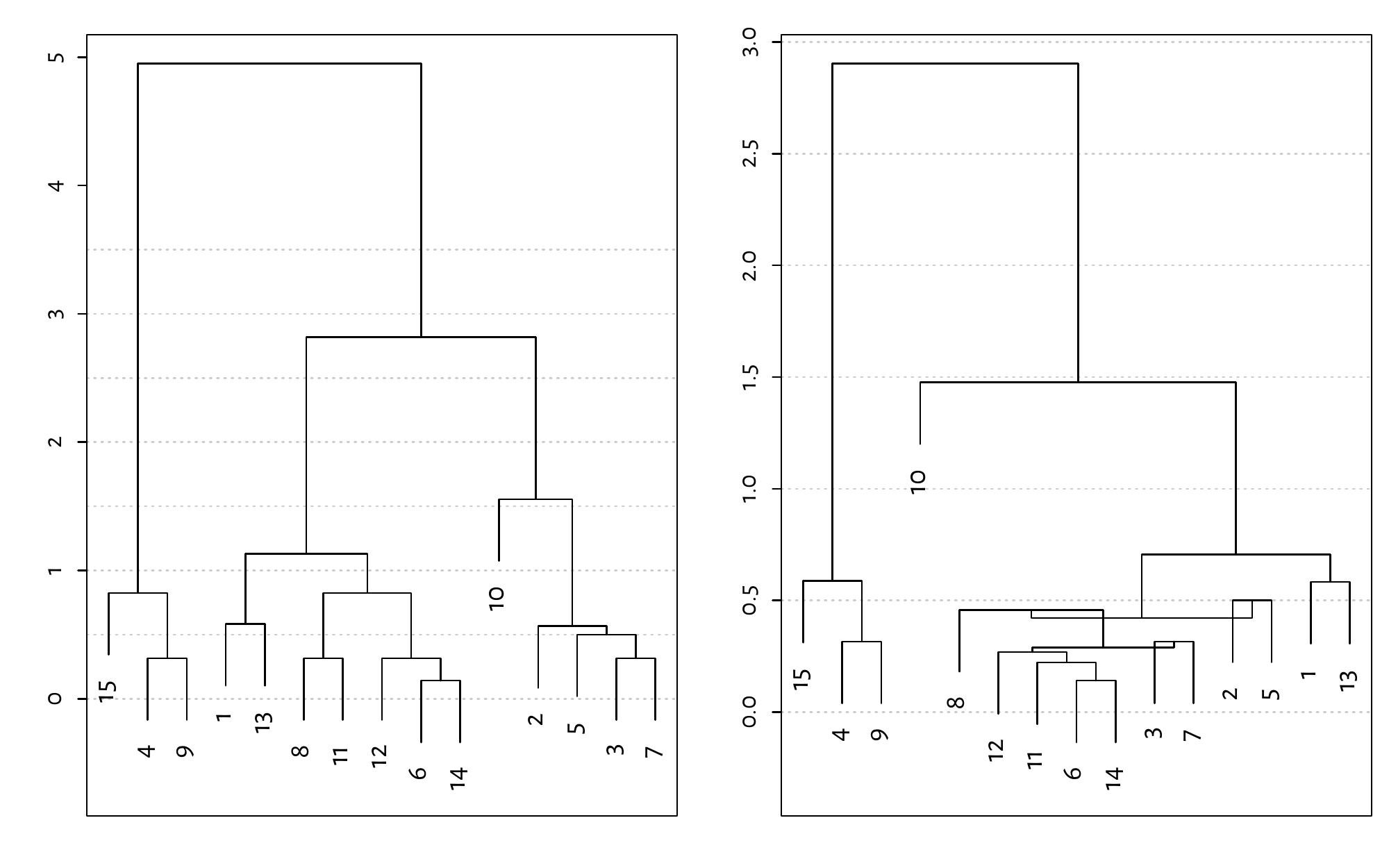}
\caption{\label{fig:dendrograms} Dendrograms for complete (lefthand side) and
centroid (righthand side) linkage-based clustering of the same example subset
of the \texttt{iris} dataset, as plotted by the \texttt{stats:::plot.hclust}
function in R. Note the inversions.}
\end{figure}

Let $h_d:\mathcal{C}^*\to[0, \infty)$
be a function which assigns each hierarchy level $\mathcal{C}^{(j)}$
a specific height,
given by:
\begin{equation*}
h_d(\mathcal{C}^{(j)})=
  \begin{cases}
    0  & \quad j=0,\\
    \min_{u, v} d(C_u^{(j-1)},\allowbreak C_v^{(j-1)}) & \quad j \ge 1.\\
  \end{cases}
\end{equation*}

\noindent
In other words, it is the distance (as given by the chosen linkage $d$)
between the two clusters merged in the $j$-th step.

\begin{remark}
For example, Figure~\ref{fig:dendrograms}
depicts two cluster dendrograms of an example dataset.
On both dendrograms, we see the merging of singleton clusters
based on $d(\{\boldsymbol{x}_6\}, \{\boldsymbol{x}_{14}\})=0.14142$
(in the first stage)
and $d(\{\boldsymbol{x}_1\}, \{\boldsymbol{x}_{13}\})\allowbreak = \allowbreak 0.5831$ (at different stages).
In both cases, the final merge is done between
$\{\boldsymbol{x}_{15},\boldsymbol{x}_4,\boldsymbol{x}_9\}$ and
$\boldsymbol{X}\setminus\{\boldsymbol{x}_{15},\boldsymbol{x}_4,\boldsymbol{x}_9\}$,
however, these are at different heights as specified by the linkage
functions in use (complete on the lefthand side and centroid on the right).
\end{remark}

Unfortunately, as noted in \cite{Johnson1967:ultrametric,Milligan1979:psycho},
the height function $h_d$ is not necessarily increasing for every linkage $d$,
i.e., it does not always hold that:
\begin{equation}\label{Eq:monotonicity}
h_d(\mathcal{C}^{(j-1)}) \le h_d(\mathcal{C}^{(j)})
\end{equation}

\noindent
for all $j$.
This may lead to dendrogram pathologies
known as \textit{inversions}.

\begin{remark}
The right side of Figure~\ref{fig:dendrograms} depicts
the clustering of an example dataset based on the centroid linkage.
Note the two dendrogram inversions, which are due to the fact
that the intercluster distances at some higher levels of the hierarchy
are smaller than the ones computed in some previous merge steps.
This happens when we merge, e.g., the cluster that features the
points $\boldsymbol{x}_3$ and $\boldsymbol{x}_7$
with the one containing $\boldsymbol{x}_6$, $\boldsymbol{x}_{11}$,
$\boldsymbol{x}_{12}$, and $\boldsymbol{x}_{14}$.
In this case, we have $d(\{\boldsymbol{x}_3\}, \{\boldsymbol{x}_7\})>
d(\{\boldsymbol{x}_3, \boldsymbol{x}_7\}, \{ \boldsymbol{x}_6,\boldsymbol{x}_{11},\boldsymbol{x}_{12},\boldsymbol{x}_{14}\})$.
\end{remark}

\begin{theorem}\label{Thm:monotonicity}
$h_d$ is increasing
if and only if
$(\forall j)$ $(\forall z\in\{1, \dots, n-j\})$, $z\neq u$
and
for $u, v\in\{1, \dots, n-j+1\}$ such that
$(u, v) = \argmin_{u, v}d(C_u^{(j-1)}, C_v^{(j-1)})$
it holds:
\begin{equation}\label{Eq:monotonicityHierarchy}
d(C_z^{(j)}, C_u^{(j-1)}\cup C_v^{(j-1)}) \ge
d(C_u^{(j-1)}, C_v^{(j-1)}).
\end{equation}
\end{theorem}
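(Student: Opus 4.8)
The plan is to reduce the global monotonicity of $h_d$ to a condition that is checked locally at each merge. First I would unwind the definitions: by construction $h_d(\mathcal{C}^{(j)})$ equals the merge distance $d(C_u^{(j-1)}, C_v^{(j-1)})$, i.e.\ the minimum of $d$ over all cluster pairs present in $\mathcal{C}^{(j-1)}$, whereas $h_d(\mathcal{C}^{(j+1)})$ is the minimum of $d$ over all pairs present in $\mathcal{C}^{(j)}$. Thus establishing inequality~\eqref{Eq:monotonicity} amounts to comparing these two successive minima. The base case $j=1$ is automatic, since $h_d(\mathcal{C}^{(0)})=0$ and $d$ is nonnegative, so I would only need to treat the transition from one nondegenerate level to the next.

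The key structural observation is that the cluster pairs available at level $j$ split into two kinds. The first kind are the pairs $(C_a^{(j)}, C_b^{(j)})$ with both $a,b\neq u$; these are exactly pairs of intact clusters, i.e.\ pairs already present in $\mathcal{C}^{(j-1)}$, and their $d$-values are unchanged by the merge. Since $(u,v)=\argmin_{u,v} d(C_u^{(j-1)}, C_v^{(j-1)})$ realises the minimum over all level-$(j-1)$ pairs, every such unchanged distance automatically satisfies $d(C_a^{(j)}, C_b^{(j)}) \ge h_d(\mathcal{C}^{(j)})$. The second kind are the pairs involving the freshly formed cluster $C_u^{(j)}=C_u^{(j-1)}\cup C_v^{(j-1)}$, namely the distances $d(C_z^{(j)}, C_u^{(j-1)}\cup C_v^{(j-1)})$ for the intact clusters $z\neq u$; these are the only genuinely new candidates for the next minimum.

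From this dichotomy both implications follow at once. For the ``only if'' direction, if $h_d$ is increasing then each new distance $d(C_z^{(j)}, C_u^{(j-1)}\cup C_v^{(j-1)})$, being one of the pairwise distances over which $h_d(\mathcal{C}^{(j+1)})$ is minimised, is bounded below by $h_d(\mathcal{C}^{(j+1)})\ge h_d(\mathcal{C}^{(j)})=d(C_u^{(j-1)}, C_v^{(j-1)})$, which is precisely~\eqref{Eq:monotonicityHierarchy}. For the ``if'' direction, assuming~\eqref{Eq:monotonicityHierarchy} the new distances are bounded below by $h_d(\mathcal{C}^{(j)})$, while the unchanged distances of the first kind enjoy the same lower bound for free; hence the minimum over all level-$j$ pairs, which is $h_d(\mathcal{C}^{(j+1)})$, is at least $h_d(\mathcal{C}^{(j)})$, giving~\eqref{Eq:monotonicity}.

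The only delicate points, and where I would spend the most care, are bookkeeping rather than mathematical depth: matching the running index $j$ of the condition (attached to the merge producing $\mathcal{C}^{(j)}$) with the shifted index of the monotonicity inequality~\eqref{Eq:monotonicity} it governs, and checking the degenerate boundary steps (the first merge, and the final merge where no intact cluster $z\neq u$ remains, so the condition is vacuous). I would also make explicit the relabelling of clusters after a merge (steps~2.2.2--2.2.3 of the procedure) in order to justify cleanly that the ``intact'' pairs at level $j$ are literally the same pairs, with the same $d$-values, as at level $j-1$; this is the one place where the otherwise immediate ``first kind'' claim must be argued rather than simply asserted.
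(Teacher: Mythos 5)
Your proposal is correct and follows essentially the same route as the paper's own proof: the forward direction bounds each new distance $d(C_z^{(j)}, C_u^{(j-1)}\cup C_v^{(j-1)})$ below by $h_d(\mathcal{C}^{(j+1)})\ge h_d(\mathcal{C}^{(j)})$ exactly as the paper does, and your backward direction is just the direct (contrapositive-free) rendering of the paper's contradiction argument, resting on the same dichotomy that any level-$j$ pair either involves the merged cluster (covered by the hypothesis) or is an intact pair whose distance already bounded the level-$(j-1)$ minimum. Your extra attention to the relabelling steps 2.2.2--2.2.3 and the degenerate boundary cases is sound bookkeeping that the paper leaves implicit.
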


\begin{proof}
$(\Longrightarrow)$ Let us fix $j\in\{1, \dots, n-1\}$ and let $(u, v) = \argmin_{u, v}d(C_u^{(j-1)},\allowbreak C_v^{(j-1)})$.
By the definition of $h_d$, we have
$h_d(\mathcal{C}^{(j)})=d(C_u^{(j-1)}, C_v^{(j-1)})$.
Without loss of generality, we can assume that $u<v$, and therefore
$C^{(j)}_u = C_u^{(j-1)}\cup C_v^{(j-1)}$.
From this it follows that:
\begin{align*}
d(C_u^{(j-1)}, C_v^{(j-1)}) = h_d(\mathcal{C}^{(j)})
&\le h_d(\mathcal{C}^{(j+1)}) = d(C_q^{(j)}, C_w^{(j)})\\
&\le d(C_z^{(j)}, C_u^{(j)}) = d(C_z^{(j)}, C_u^{(j-1)}\cup C_v^{(j-1)})
\end{align*}

\noindent
for all $z\in\{1, \dots, n-j\}$
where $d(C_q^{(j)}, C_w^{(j)}) = \min_{q, w}d(C_q^{(j)}, C_w^{(j)})$.
\bigskip

$(\Longleftarrow)$ Let us assume that:
\[
d(C_z^{(j)}, C_u^{(j-1)}\cup C_v^{(j-1)}) \ge
d(C_u^{(j-1)}, C_v^{(j-1)}) =
\min_{u<v}d(C_u^{(j-1)},\allowbreak C_v^{(j-1)}),
\]

\noindent
but at the same time:
\[
\min_{u<v}d(C_u^{(j-1)},\allowbreak C_v^{(j-1)}) >
\min_{z,w}d(C_z^{(j)},\allowbreak C_w^{(j)}).
\]

\noindent
We thus get $C_w^{(j)}\neq C_u^{(j-1)}\cup C_v^{(j-1)}\neq C_z^{(j)}$
and also $C_w^{(j)} = C_w^{(j-1)}$
and $C_z^{(j)} = C_z^{(j-1)}$.
This contradicts our assumption, since
$\min_{z,w}d(C_z^{(j)},\allowbreak C_w^{(j)}) =
\min_{z,w}d(C_z^{(j-1)},\allowbreak C_w^{(j-1)})$.
Therefore,  $h_d$ is increasing and  the proof is complete.
\end{proof}

\bigskip
As far as the Lance--Williams formula is concerned, we have the following result \cite{Milligan1979:psycho}.

\begin{theorem}\label{Th:MilliganMonotonic}
If:
\begin{enumerate}[(1)]
\item $\alpha_u+\alpha_v+\beta \ge 1$,
\item $\alpha_u\ge0$, $\alpha_v\ge0$
\item $\gamma$ is such that:
\begin{enumerate}
\item $\gamma\ge 0$ or
\item $\gamma<0$ and $|\gamma|\le\min\{\alpha_u, \alpha_v\}$,
\end{enumerate}
\end{enumerate}
then for $d$ given by \eqref{Eq:LanceWilliams},
it holds:
\begin{equation*}\label{Eq:monotonicMilligan}
d(C_z^{(j)}, C_u^{(j-1)}\cup C_v^{(j-1)})\ge d(C_u^{(j-1)}, C_v^{(j-1)}),
\end{equation*}

\noindent
for every $j=1, \dots, n-1$,
$(u,v)=\argmin_{(u, v)}d(C^{(j-1)}_u, C^{(j-1)}_v)$, $u\neq v$,
$z\neq u$, and
$z\in\{1, \dots, n-j\}$,
i.e.,
the corresponding $h_d$ is increasing. %
\end{theorem}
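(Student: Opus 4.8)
The plan is to establish the displayed inequality in the statement directly, since by Theorem~\ref{Thm:monotonicity} this inequality is precisely the condition \eqref{Eq:monotonicityHierarchy} equivalent to $h_d$ being increasing. To lighten the bookkeeping I would abbreviate the three relevant distances as $a=d(C_z^{(j-1)},C_u^{(j-1)})$, $b=d(C_z^{(j-1)},C_v^{(j-1)})$, and $c=d(C_u^{(j-1)},C_v^{(j-1)})$, so that the Lance--Williams formula \eqref{Eq:LanceWilliams} rewrites the quantity to be bounded as $\alpha_u a+\alpha_v b+\beta c+\gamma|a-b|$, and the goal becomes showing that this is at least $c$.

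The first key step is to exploit that $(u,v)$ is a minimiser: since $C_z^{(j-1)}$, $C_u^{(j-1)}$, $C_v^{(j-1)}$ are three distinct clusters present at level $j-1$, the pairs $(z,u)$ and $(z,v)$ compete in the same $\argmin$, whence $c\le a$ and $c\le b$; moreover $c\ge0$ since $d$ is nonnegative. Next, invoking the symmetry of all hypotheses under the simultaneous swap $(\alpha_u,a)\leftrightarrow(\alpha_v,b)$ together with the symmetry of $|a-b|$, I would assume without loss of generality that $a\ge b$, so that $|a-b|=a-b$ and the target expression becomes $(\alpha_u+\gamma)a+(\alpha_v-\gamma)b+\beta c$.

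The argument then splits according to condition~(3). When $\gamma\ge0$ I would keep the terms ungrouped: since $\alpha_u,\alpha_v\ge0$ and $a,b\ge c$ yield $\alpha_u a\ge\alpha_u c$ and $\alpha_v b\ge\alpha_v c$, while $\gamma(a-b)\ge0$, the whole sum is at least $(\alpha_u+\alpha_v+\beta)c$. When $\gamma<0$ with $|\gamma|\le\min\{\alpha_u,\alpha_v\}$ I would instead use the grouped form: the coefficients $\alpha_u+\gamma=\alpha_u-|\gamma|$ and $\alpha_v-\gamma=\alpha_v+|\gamma|$ are both nonnegative, so $a,b\ge c$ again produce the lower bound $(\alpha_u+\gamma)c+(\alpha_v-\gamma)c+\beta c=(\alpha_u+\alpha_v+\beta)c$. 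In either case the final step is identical: condition~(1), $\alpha_u+\alpha_v+\beta\ge1$, together with $c\ge0$, gives $(\alpha_u+\alpha_v+\beta)c\ge c$, which closes the proof.

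The main obstacle — indeed the only delicate point — is the branch $\gamma<0$, where the term $\gamma|a-b|$ is negative and cannot simply be discarded. The remedy is the regrouping into $(\alpha_u+\gamma)a+(\alpha_v-\gamma)b$, and it is exactly the hypothesis $|\gamma|\le\min\{\alpha_u,\alpha_v\}$ that keeps both reweighted coefficients nonnegative, allowing the monotone bounds $a,b\ge c$ to be applied without reversing an inequality. It is worth flagging that the two branches of condition~(3) genuinely require different manipulations: in the branch $\gamma\ge0$ the grouped weight $\alpha_v-\gamma$ may turn negative, so there one must leave the terms ungrouped and merely drop the nonnegative $\gamma(a-b)$, whereas for $\gamma<0$ the ungrouped route is unavailable and the regrouping is essential.
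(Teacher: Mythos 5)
Your proof is correct. The paper does not actually prove this theorem itself --- it states ``See \cite{Milligan1979:psycho} for a proof'' --- and your argument is precisely the classical one from that reference: use minimality of $(u,v)$ to get $c\le\min\{a,b\}$ and $c\ge 0$, reduce to $a\ge b$ by symmetry, then split on the sign of $\gamma$, dropping the nonnegative term $\gamma(a-b)$ when $\gamma\ge 0$ and regrouping into $(\alpha_u+\gamma)a+(\alpha_v-\gamma)b$ (both coefficients nonnegative by hypothesis (3b)) when $\gamma<0$, finishing with $(\alpha_u+\alpha_v+\beta)c\ge c$. Your flagging of why the two branches need different manipulations is exactly the right delicate point, so your write-up in effect supplies the self-contained proof the paper omits.
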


See \cite{Milligan1979:psycho} for a proof.

Hence, single, complete, average, weighted average,
and Ward linkages yield increasing $h_d$.

\bigskip
Let us move on to the OWA linkage case.
We note that already even very simple weighting triangles can
fail to produce increasing OWA-based $h_d$.

\begin{remark}
If
$d_\triangle(d_1,\dots,d_m)=0.5(d_{(m)}+d_{(m-1)})$ for $m>1$,
i.e., the arithmetic mean of the two smallest values,
then $h_d$ is not increasing.
For example, assume that
$\mathbf{X}=\{\boldsymbol{x}_{1}, \boldsymbol{x}_{2}, \boldsymbol{x}_{3}, \boldsymbol{x}_{4}\}$
and let the distance matrix be such that
$\|\boldsymbol{x}_i-\boldsymbol{x}_j\|$ are given by:
$$
\vect{D}=\left[\begin{array}{rrrr}
0.0& 0.4& 0.6& 0.9 \\
0.4& 0.0& 0.9& 0.6 \\
0.6& 0.9& 0.0& 0.7 \\
0.9& 0.6& 0.7& 0.0  \\
       \end{array}
\right].
$$

\noindent
Then the agglomerative clustering algorithm yields what follows.

\begin{enumerate}
\item Start with 4 singletons $\big\{ \{\boldsymbol{x}_{1}\}, \{\boldsymbol{x}_{2}\}, \{\boldsymbol{x}_{3}\}, \{\boldsymbol{x}_{4}\} \big\}$;

the closest pair of clusters is such that
$d(\{\boldsymbol{x}_{1}\},\{\boldsymbol{x}_{2}\})=0.4$;

merge this pair.

\item Current partition is $\big\{ \{\boldsymbol{x}_{1},\boldsymbol{x}_{2}\}, \{\boldsymbol{x}_{3}\},\{\boldsymbol{x}_{4}\} \big\}$;

now the closest pair of clusters has $d(\{\boldsymbol{x}_{3}\},\{\boldsymbol{x}_{4}\})=0.7$;

merge them.

\item Current partition is $\big\{ \{\boldsymbol{x}_{1},\boldsymbol{x}_{2}\},\{\boldsymbol{x}_{3},\boldsymbol{x}_{4}\} \big\}$;

the closest (and the only) pair now fulfils $d(\{\boldsymbol{x}_{1},\boldsymbol{x}_{2}\},\{\boldsymbol{x}_{3},\boldsymbol{x}_{4}\})=0.6$ (which is a smaller distance than in the previous iteration);

merge this pair.
\end{enumerate}
Thus, when depicted on a dendrogram, we would get an inversion.
\end{remark}

\bigskip
Unfortunately, for OWA-based linkages, $d(C_z^{(j)}, C_u^{(j-1)}\cup C_v^{(j-1)})$
does not depend on $d(C_u^{(j-1)}, C_v^{(j-1)})$.
Therefore, we need to consider a different condition that will be suitable
in the dendrogram plotting context.

By the construction of the agglomerative hierarchical clustering algorithm,
we know that
$d(C_u^{(j-1)}, C_v^{(j-1)}) \le d(C_z^{(j-1)}, C_u^{(j-1)})$
and
$d(C_u^{(j-1)}, C_v^{(j-1)}) \le d(C_z^{(j-1)}, C_v^{(j-1)})$
for all $z\not\in\{u,v\}$, i.e.,
\[
d(C_u^{(j-1)}, C_v^{(j-1)})\le \min\{
d(C_z^{(j-1)}, C_u^{(j-1)}),
 d(C_z^{(j-1)}, C_v^{(j-1)})
\},
\]

\noindent
as otherwise, $C_z$ would have been merged with either $C_u$ or $C_j$
in the $j$-th step, and not $C_u$ with $C_v$.

Therefore, if:
\begin{equation}
d(C_z^{(j)}, C_u^{(j-1)}\cup C_v^{(j-1)}) \ge
\min\{
d(C_z^{(j-1)}, C_u^{(j-1)}),
 d(C_z^{(j-1)}, C_v^{(j-1)})
\},
\end{equation}

\noindent
then \textit{necessarily} the corresponding $h_d$
is increasing.

\bigskip
Focusing on the first type of extended OWA, we shall thus be interested in identifying
coefficient vectors $\boldsymbol{c}$ such that:
\begin{equation} \label{eq:owa1ge}
\mathrm{OWA}_{\boldsymbol{c}}(\boldsymbol{u}, \boldsymbol{v})
\ge \min\{
\mathrm{OWA}_{\boldsymbol{c}}(\boldsymbol{u}),
\mathrm{OWA}_{\boldsymbol{c}}(\boldsymbol{v})
\}
\end{equation}

\noindent
for all $\boldsymbol{u}, \boldsymbol{v}$ of any cardinalities,
where $(\boldsymbol{u}, \boldsymbol{v})$ denotes vector concatenation.

The first result we present toward this will help us establish some necessary conditions.

\begin{theorem}\label{thm:nec}
Let $\boldsymbol{c}=(c_1,c_2,\dots)$ be a  coefficient vector
with $c_1=1$ and $c_2,c_3,\dots\ge 0$.
Then for any $n$, $m$ and $u_1,\dots,u_n,v_1,\dots,v_m$, if it holds:
\begin{equation*}
\mathrm{OWA}_{\boldsymbol{c}}(u_1,\dots,u_n, v_1,\dots,v_m)
\ge \min\{
\mathrm{OWA}_{\boldsymbol{c}}(u_1,\dots,u_n),
\mathrm{OWA}_{\boldsymbol{c}}(v_1,\dots,v_m)
\},
\end{equation*}
then for all $k,l$ such that:
${\sum_{i=1}^n c_i}{\sum_{i=1}^l c_i} \ge {\sum_{i=1}^m c_i}{\sum_{i=1}^k c_i}$,
we must have:
\begin{equation} \label{eq:nec1}
    \sum_{i=k+1}^{k+l} c_i \sum_{i=1}^m c_i \ge \sum_{i=n+1}^{n+m} c_i \sum_{i=1}^l c_i.
\end{equation}
\end{theorem}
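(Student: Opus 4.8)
The plan is to read off \eqref{eq:nec1} by \emph{instantiating} the hypothesised inequality at one cleverly chosen pair of two-valued vectors. Throughout write $S_j=\sum_{i=1}^{j}c_i$; since $c_1=1$ and $c_i\ge 0$, every $S_j$ with $j\ge 1$ is strictly positive, so all the divisions and cross-multiplications below are legitimate. Fix $n,m$ and indices $1\le k\le n$, $1\le l\le m$ satisfying the hypothesis $S_nS_l\ge S_mS_k$. The naive attempt — take $\boldsymbol{u}$ to be $k$ ones followed by $n-k$ zeros and $\boldsymbol{v}$ to be $l$ ones followed by $m-l$ zeros — only delivers the \emph{weaker} relation $\sum_{i=k+1}^{k+l}c_i\cdot S_n\ge S_k\cdot\sum_{i=n+1}^{n+m}c_i$; with, e.g., the geometric weights $c_i=2^{i-1}$ this weaker relation holds while \eqref{eq:nec1} fails, so it cannot be the whole story. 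The extra leverage comes from replacing the ones in $\boldsymbol{u}$ by a tunable level and choosing that level so that the two component OWAs coincide.

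Concretely, I set
\[
t:=\frac{S_nS_l}{S_kS_m},
\]
which is $\ge 1$ \emph{precisely} because of the hypothesis $S_nS_l\ge S_mS_k$, and I take
\[
\boldsymbol{u}=(\underbrace{t,\dots,t}_{k},\underbrace{0,\dots,0}_{n-k}),
\qquad
\boldsymbol{v}=(\underbrace{1,\dots,1}_{l},\underbrace{0,\dots,0}_{m-l}).
\]
Since $t\ge 1>0$, both vectors are already sorted non-increasingly, so $\mathrm{OWA}_{\boldsymbol{c}}(\boldsymbol{u})=tS_k/S_n$ and $\mathrm{OWA}_{\boldsymbol{c}}(\boldsymbol{v})=S_l/S_m$; the choice of $t$ forces these to be equal, both equal to $S_l/S_m$, so that the minimum on the right-hand side of the hypothesis is unambiguously $S_l/S_m$. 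Again because $t\ge 1$, the concatenation sorted non-increasingly consists of $k$ copies of $t$, then $l$ copies of $1$, then zeros, whence
\[
\mathrm{OWA}_{\boldsymbol{c}}(\boldsymbol{u},\boldsymbol{v})
=\frac{tS_k+\sum_{i=k+1}^{k+l}c_i}{S_{n+m}}.
\]

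I then feed these quantities into the hypothesised inequality $\mathrm{OWA}_{\boldsymbol{c}}(\boldsymbol{u},\boldsymbol{v})\ge S_l/S_m$, clear the (positive) denominators, and substitute $tS_kS_m=S_nS_l$ together with $S_{n+m}=S_n+\sum_{i=n+1}^{n+m}c_i$. The term $S_nS_l$ then cancels from both sides, leaving exactly $\sum_{i=k+1}^{k+l}c_i\cdot S_m\ge\sum_{i=n+1}^{n+m}c_i\cdot S_l$, which is \eqref{eq:nec1}. The main obstacle is conceptual rather than computational: one must recognise that plain $0/1$ test vectors are too weak and that the correct move is to equalise the two component OWAs. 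The value $t=S_nS_l/(S_kS_m)$ is then forced, and it is a pleasant structural coincidence that the theorem's hypothesis $S_nS_l\ge S_mS_k$ is exactly what guarantees $t\ge 1$, i.e.\ that the $t$-block sits above the $1$-block in the sorted concatenation. The only remaining care is the degenerate boundary: $l=0$ reduces \eqref{eq:nec1} to $0\ge 0$, while $k=0$ (or indices outside $\{1,\dots,n\}\times\{1,\dots,m\}$) lies outside the construction and is handled as a trivial or vacuous case.
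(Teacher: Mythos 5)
Your proof is correct and is essentially the paper's own argument: the paper tests the hypothesis on block-constant vectors $\bar{\boldsymbol{e}}_k^n$ and $\bar{\boldsymbol{e}}_l^m$ whose nonzero entries are $\sum_{i=1}^n c_i/\sum_{i=1}^k c_i$ and $\sum_{i=1}^m c_i/\sum_{i=1}^l c_i$, chosen so that both component OWAs equal $1$, and your vectors are exactly these rescaled by the positive factor $\sum_{i=1}^l c_i/\sum_{i=1}^m c_i$ (equalising the component OWAs at $S_l/S_m$ instead of $1$), which by positive homogeneity of the OWA is the same construction. The use of the hypothesis $S_nS_l\ge S_mS_k$ to fix the sort order of the concatenation and the subsequent algebra are identical to the paper's.
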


\begin{proof}
For any $k$ and $n$, with $k \leq n$,  let $\bar{\boldsymbol{e}}_k^n=(\bar{e}_1,\dots,\bar{e}_n)$
denote a sequence such that
$\bar{e}_1=\dots=\bar{e}_k = \frac{\sum_{i=1}^n c_i}{\sum_{i=1}^k c_i}$ and
$\bar{e}_{k+1}=\dots=\bar{e}_n = 0$.
It follows that
$\mathrm{OWA}_{\boldsymbol{c}}(\bar{\boldsymbol{e}}_k^n)=1$.

For any $k\le n$, $l\le m$, consider
$\boldsymbol{u}=\bar{\boldsymbol{e}}_k^n$ and $\boldsymbol{v}=\bar{\boldsymbol{e}}_l^m$.
As a necessary condition, we must have:

$$\mathrm{OWA}_{\boldsymbol{c}}(\boldsymbol{u}, \boldsymbol{v})
\ge 1= \min\{\mathrm{OWA}_{\boldsymbol{c}}(\boldsymbol{u}), \mathrm{OWA}_{\boldsymbol{c}}(\boldsymbol{v})\}.$$

If $u_1 \ge v_1$, i.e., if
${\sum_{i=1}^n c_i}{\sum_{i=1}^l c_i} \ge {\sum_{i=1}^m c_i}{\sum_{i=1}^k c_i}$,
the above is equivalent to stating that:

\begin{eqnarray*}
\frac{\sum_{i=1}^k c_i u_i + \sum_{i=k+1}^{k+l} c_i v_{i-k}}{\sum_{i=1}^{n+m} c_i} &\ge& 1 \qquad \Longleftrightarrow\\
\sum_{i=1}^k c_i u_i + \sum_{i=k+1}^{k+l} c_i v_{i-k} &\ge& \sum_{i=1}^{n+m} c_i  \qquad \Longleftrightarrow\\
\sum_{i=1}^k c_i \frac{\sum_{i=1}^n c_i}{\sum_{i=1}^k c_i} + \sum_{i=k+1}^{k+l} c_i \frac{\sum_{i=1}^m c_i}{\sum_{i=1}^l c_i} &\ge& \sum_{i=1}^{n+m} c_i  \qquad \Longleftrightarrow\\
\sum_{i=1}^n c_i \sum_{i=1}^l c_i + \sum_{i=k+1}^{k+l} c_i \sum_{i=1}^m c_i &\ge& \sum_{i=1}^{n+m} c_i \sum_{i=1}^l c_i  \qquad \Longleftrightarrow\\
\sum_{i=k+1}^{k+l} c_i \sum_{i=1}^m c_i &\ge& \sum_{i=n+1}^{n+m} c_i \sum_{i=1}^l c_i.  \\
\end{eqnarray*}

\end{proof}

\noindent
Particular cases of the above allow us to establish the following.

 \begin{corollary}
For $c_1 =1$, Eq.~\eqref{eq:owa1ge} necessarily implies that:
\begin{equation}\label{eq:neces1}
c_2 \ge c_3 \ge c_4 \ge \dots \ge 0.
\end{equation}
and:
\begin{equation}\label{eq:neces2}
\displaystyle\frac{\sum_{i=1}^{l} c_i }{\sum_{i=1}^{m} c_i}
\le
\displaystyle\frac{\sum_{i=l+1}^{2l} c_i}{\sum_{i=m+1}^{2m} c_i}, \qquad l \leq m.
\end{equation}

\end{corollary}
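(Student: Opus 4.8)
The plan is to derive both inequalities purely as special instances of Theorem~\ref{thm:nec}, choosing the free parameters $n,m,k,l$ so that the left- and right-hand sides of \eqref{eq:nec1} collapse onto the two desired expressions. In each case the only thing demanding care is checking that the guarding hypothesis ${\sum_{i=1}^n c_i}{\sum_{i=1}^l c_i} \ge {\sum_{i=1}^m c_i}{\sum_{i=1}^k c_i}$ holds for the chosen parameters, since \eqref{eq:nec1} is asserted only under that constraint. The hypothesis of the corollary, namely \eqref{eq:owa1ge}, is exactly what powers Theorem~\ref{thm:nec}, so no further input is needed.

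For the monotonicity \eqref{eq:neces1}, I would take $m=1$ (which forces $l=1$) and leave $n\ge 1$ and $1\le k\le n$ arbitrary. The guarding hypothesis then reduces to $\sum_{i=1}^n c_i \ge \sum_{i=1}^k c_i$, which holds automatically because $k\le n$ and the coefficients are nonnegative. With these substitutions and $c_1=1$, the condition \eqref{eq:nec1} becomes simply $c_{k+1}\ge c_{n+1}$. Specialising to $k=n-1$ for each $n\ge 2$ yields $c_n\ge c_{n+1}$ for all $n\ge 2$, i.e.\ $c_2\ge c_3\ge c_4\ge\dots$; the trailing $\dots\ge 0$ is just the standing assumption $c_i\ge 0$ on $\boldsymbol{c}$.

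For \eqref{eq:neces2}, I would set $n=m$ and $k=l$ with $l\le m$. Then the guarding hypothesis becomes an equality and is trivially satisfied, while both range constraints $k\le n$ and $l\le m$ reduce to $l\le m$. Substituting into \eqref{eq:nec1} gives $\sum_{i=l+1}^{2l}c_i \sum_{i=1}^m c_i \ge \sum_{i=m+1}^{2m} c_i \sum_{i=1}^l c_i$, and dividing both sides by the positive quantity $\sum_{i=1}^m c_i \sum_{i=m+1}^{2m} c_i$ produces precisely the claimed ratio inequality $\frac{\sum_{i=1}^{l} c_i}{\sum_{i=1}^{m} c_i}\le\frac{\sum_{i=l+1}^{2l} c_i}{\sum_{i=m+1}^{2m} c_i}$.

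I do not expect a genuine obstacle, as the corollary is entirely a matter of instantiating Theorem~\ref{thm:nec}. The only points requiring attention are the bookkeeping that confirms each parameter choice respects $k\le n$ and $l\le m$ together with the guarding hypothesis, and the index alignment showing that the block $\{n+1,\dots,n+m\}$ appearing in \eqref{eq:nec1} coincides with $\{m+1,\dots,2m\}$ after the substitution $n=m$ (and, for \eqref{eq:neces1}, that $\{k+1,\dots,k+l\}$ collapses to the singleton $\{k+1\}$). Once these identifications are made, both inequalities follow immediately.
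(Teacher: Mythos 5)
Your proposal is correct and matches the paper's own proof, which obtains \eqref{eq:neces1} from Theorem~\ref{thm:nec} with $l=m=1$, $k=n-1$, $n\ge 2$, and \eqref{eq:neces2} from the special case $k=l$, $n=m$ — exactly the instantiations you chose, with your verification of the guarding hypothesis and the index bookkeeping being the same (the paper merely leaves those routine checks implicit).
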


Equation~\ref{eq:neces1} is obtained from the condition in Theorem \ref{thm:nec}
 for $n\ge 2$, $l=m=1$, $k=n-1$, while Eq.~\ref{eq:neces2} follows from the special case $k=l$ and $n=m$.

While these two conditions are necessary, the following example illustrates that they are not sufficient.

\begin{comment}
# a somewhat simpler example u and v - in-line with the necessary conditions from Thm 11
> c  # c(1,1/2,3/8,3/8,9/32,9/32,9/32,9/32)
[1] 1.00000 0.50000 0.37500 0.37500 0.28125 0.28125 0.28125 0.28125

> u  # owa(u, c) == 1
[1] 1.875 0.000 0.000
> v  # owa(v, c) == 1
[1] 1.6875 1.6875 0.0000 0.0000 0.0000
> owa(c(u,v), c)
[1] 0.99306
\end{comment}

\begin{example}
The sequence $\boldsymbol c = (1,1/2,3/8,3/8,9/32,9/32,9/32,9/32)$ satisfies the necessary conditions in Eq.~\eqref{eq:neces1} and Eq.~\eqref{eq:neces2}, however fails to satisfy Eq.~\eqref{eq:owa1ge} for e.g., $\boldsymbol u = (1.875,0,0)$, $\boldsymbol v = (1.6875, 1.6875,0,0,0)$.  We have $\mathrm{OWA}_{\boldsymbol c}(\boldsymbol u) = 1$, $\mathrm{OWA}_{\boldsymbol c}(\boldsymbol v) = 1,$ and $\mathrm{OWA}_{\boldsymbol c}(\boldsymbol u,\boldsymbol v) = 0.99306$.
\end{example}

Hence, we look towards establishing a sufficient condition, which we formulate as follows.

\begin{theorem} \label{thm:suf}
Let $\boldsymbol{c}=(c_1,c_2,\dots)$ be a  coefficient vector
with $c_1=1$ and $c_2 \ge c_3\ge \ldots \ge 0$.
Then a sufficient condition for \eqref{eq:owa1ge} is:
\begin{equation} \label{eq:suf1}
    \sum_{i=n+1}^{n+l} c_i \sum_{i=1}^m c_i \ge \sum_{i=n+1}^{n+m} c_i \sum_{i=1}^l c_i.
\end{equation}
\noindent
for any $n$, $l<m$ and $u_1,\dots,u_n,v_1,\dots,v_m$.
\end{theorem}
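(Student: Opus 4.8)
The plan is to recast \eqref{eq:owa1ge} into a form where concatenation becomes addition, and then to reduce the verification to the two-level (step) vectors that already drove Theorem~\ref{thm:nec}. Writing $S_k=\sum_{i=1}^k c_i$ and, for a nonnegative vector $\boldsymbol{d}$ of length $N$, letting $N_{\boldsymbol d}(t)=|\{i:d_i>t\}|$ count the entries exceeding $t$, the layer-cake identity $d_{(i)}=\int_0^\infty \mathbf 1[d_{(i)}>t]\,dt$ yields $\mathrm{OWA}_{\boldsymbol c}(\boldsymbol d)=\frac{1}{S_N}\int_0^\infty S_{N_{\boldsymbol d}(t)}\,dt$, since the top $N_{\boldsymbol d}(t)$ order statistics are exactly the entries above $t$. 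The decisive gain is that $N_{(\boldsymbol u,\boldsymbol v)}(t)=N_{\boldsymbol u}(t)+N_{\boldsymbol v}(t)$, so that \eqref{eq:owa1ge}, after clearing denominators and assuming without loss of generality (both $\min$ and concatenation are symmetric) that $\mathrm{OWA}_{\boldsymbol c}(\boldsymbol v)\le\mathrm{OWA}_{\boldsymbol c}(\boldsymbol u)$, becomes the scalar inequality $S_m\int_0^\infty S_{p(t)+q(t)}\,dt\ge S_{n+m}\int_0^\infty S_{q(t)}\,dt$, to be proved under the constraint $S_m\int_0^\infty S_{p(t)}\,dt\ge S_n\int_0^\infty S_{q(t)}\,dt$, where $p=N_{\boldsymbol u}$ and $q=N_{\boldsymbol v}$ are nonincreasing integer step functions.

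Both integrands are built from the same monotone data, so $p$ and $q$ are comonotone: partitioning $[0,\infty)$ into the finitely many slabs on which $(p,q)$ is constant, of widths $\delta_s\ge0$ and heights $(P_s,Q_s)$ nondecreasing across slabs, renders both the target and the constraint \emph{linear} in the $\delta_s$. For fixed collections of heights we are then minimising a linear functional of $\boldsymbol\delta\ge\boldsymbol 0$ subject to one linear inequality plus a scale normalisation (legitimate by positive homogeneity of the OWA), whose minimum is attained at a vertex using only a couple of active slabs. I would use this to argue that it suffices to check the inequality when $\boldsymbol u$ and $\boldsymbol v$ are each of the two-level form $\bar{\boldsymbol e}_k^n,\bar{\boldsymbol e}_l^m$ from the proof of Theorem~\ref{thm:nec} — precisely the vectors for which \eqref{eq:owa1ge} unwinds to the family \eqref{eq:nec1}.

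It then remains to show that hypothesis \eqref{eq:suf1} implies every step-vector instance that can arise. Here I would record the clean restatement of \eqref{eq:suf1}: dividing through, it says exactly that for each fixed $n$ the ratio $k\mapsto (S_{n+k}-S_n)/S_k=\sum_{i=n+1}^{n+k}c_i\big/\sum_{i=1}^{k}c_i$ is nonincreasing in $k$. With $c_2\ge c_3\ge\cdots\ge 0$ held fixed, the ``interior'' configurations (where the concentrated block of $\boldsymbol v$ still sits below all of $\boldsymbol u$) follow for free from the termwise comparison $c_{l+j}\ge c_{m+j}$ valid for $l<m$ and $j\ge1$; the remaining configurations, where the top of the merged list is supplied by a single large entry of $\boldsymbol u$, are exactly those tamed by chaining the monotone-ratio property across the relevant index blocks.

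The main obstacle is the extremality reduction in the middle step: one must control the number of distinct levels at an optimum and track which of the two vectors realises $\min\{\mathrm{OWA}_{\boldsymbol c}(\boldsymbol u),\mathrm{OWA}_{\boldsymbol c}(\boldsymbol v)\}$ as the widths vary, since the objective is only piecewise linear across the regions where the interleaving order changes. A point worth stressing is that the two ``obvious'' necessary conditions \eqref{eq:neces1}--\eqref{eq:neces2} are genuinely insufficient, as the preceding example demonstrates, so the argument must invoke the full monotone-ratio strength of \eqref{eq:suf1} rather than its low-index consequences alone; concentrating the mass of $\boldsymbol u$ into one large entry, instead of spreading it, is the worst case that \eqref{eq:suf1} is tuned to defeat.
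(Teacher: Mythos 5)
Your layer-cake identity is correct and is a genuinely nice reformulation (it is close in spirit to the paper's Lemma~\ref{lem1}, which also compares OWAs through cumulative weight sums evaluated at step vectors). The gap is in the extremality reduction, which is the load-bearing step of your plan. Writing $S_k=\sum_{i=1}^k c_i$, your feasible set for fixed heights is $\{\boldsymbol\delta\ge \boldsymbol 0,\ G(\boldsymbol\delta)\ge 0,\ \sum_s\delta_s=1\}$, which has \emph{two} constraints besides nonnegativity; its vertices therefore have up to \emph{two} positive slab widths, not one. A two-slab vertex with heights $(P_1,Q_1)\ge(P_2,Q_2)$ corresponds to $\boldsymbol u$ having $P_2$ entries at level $\delta_1+\delta_2$, $P_1-P_2$ entries at level $\delta_1$, and $n-P_1$ zeros (similarly for $\boldsymbol v$): these are in general \emph{three}-level vectors sharing a common pair of positive values. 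The family $\bar{\boldsymbol e}_k^n,\bar{\boldsymbol e}_l^m$ of Theorem~\ref{thm:nec} is only the special sub-case $P_1=P_2$, $Q_2=0$, so the claimed reduction to the instances \eqref{eq:nec1} does not follow from the vertex argument. Consequently the final step — "chaining the monotone-ratio property across the relevant index blocks", the only place where \eqref{eq:suf1} would actually be used — is left unexecuted precisely on the configurations where the difficulty lives. There is also a structural warning sign: if \eqref{eq:owa1ge} really could be reduced to the Theorem~\ref{thm:nec} instances, then the necessary family \eqref{eq:nec1} would automatically be sufficient, i.e., your argument would resolve the necessary-versus-sufficient gap that the paper explicitly declares an open problem at the end of Section~\ref{sec:owadendr}; that is not impossible, but it means the missing step is not a routine detail.

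For comparison, the paper's proof avoids any extremal analysis of the pair $(\boldsymbol u,\boldsymbol v)$. It normalises $\mathrm{OWA}_{\boldsymbol c}(\boldsymbol u)=\mathrm{OWA}_{\boldsymbol c}(\boldsymbol v)=\bar u$ with $u_1\ge v_1$ and chains three elementary inequalities: by idempotency, the unsorted concatenation of $\boldsymbol u$ with $m$ copies of $\bar u$ equals $\bar u$ (Lemma~\ref{lem3}); condition \eqref{eq:suf1}, read through the cumulative-sum (stochastic dominance) criterion of Lemma~\ref{lem1}, says the shifted weight vector $\boldsymbol c_{+n}$ dominates $\boldsymbol c$, so replacing the constant padding by $\boldsymbol v$ cannot decrease the value (Lemma~\ref{lem2}); finally, re-sorting the concatenation can only increase the result, because the weights are nonincreasing from the second position onwards and $v_1$ never occupies the first position. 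To rescue your route you would need either to verify \eqref{eq:owa1ge} directly from \eqref{eq:suf1} on all two-slab vertex configurations described above, or to replace the global vertex reduction by a padding-and-domination argument of the paper's type — in either case substantial new work rather than a fill-in.
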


The proof is based on several lemmas presented below.

\begin{lemma} \label{lem1}
    Let $\mathbf c, \mathbf d$ be weighting vectors and $\mathbf u \in [0,\infty)^n$. Then $\mathrm{OWA}_{\mathbf c}(\mathbf u) \leq \mathrm{OWA}_{\mathbf d}(\mathbf u) $ if and only if, for $k = 1,\ldots,n$, $\mathrm{OWA}_{\mathbf c}(\mathbf e_k) \leq \mathrm{OWA}_{\mathbf d}(\mathbf e_k) $, where
    $\mathbf e_k=(1,1,\ldots,1,0,0\ldots,0)$ is the vector with $k$ ones and $n-k$ zeros.
\end{lemma}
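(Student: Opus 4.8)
The plan is to exploit the fact that, although $\mathrm{OWA}_{\mathbf c}$ is nonlinear on $[0,\infty)^n$ because it reorders its arguments, it becomes a genuine \emph{linear} functional once we restrict to vectors that are already sorted in nonincreasing order. Write $\mathbf u^{\downarrow}=(u_{(1)},\dots,u_{(n)})$ for the nonincreasingly sorted version of $\mathbf u$. By symmetry of OWA, $\mathrm{OWA}_{\mathbf c}(\mathbf u)=\mathrm{OWA}_{\mathbf c}(\mathbf u^{\downarrow})$, and on the cone of sorted vectors the map $\mathbf x\mapsto \sum_{i=1}^n c_i x_i \big/ \sum_{i=1}^n c_i$ is linear (the denominator being a fixed constant once the length $n$ is fixed); the same holds for $\mathbf d$. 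So I would first establish this linearity as the organising principle.

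The second ingredient is an elementary telescoping decomposition of any sorted nonnegative vector into the step vectors $\mathbf e_k$. Setting $\lambda_k=u_{(k)}-u_{(k+1)}$ for $k=1,\dots,n-1$ and $\lambda_n=u_{(n)}$, one checks that all $\lambda_k\ge 0$ (since $u_{(1)}\ge\dots\ge u_{(n)}\ge 0$) and that $\mathbf u^{\downarrow}=\sum_{k=1}^n \lambda_k\,\mathbf e_k$: indeed the $j$-th coordinate of the right-hand side equals $\sum_{k\ge j}\lambda_k=u_{(j)}$. The decisive point is that every vector in this decomposition---both $\mathbf u^{\downarrow}$ and each $\mathbf e_k$---already lies in the cone of sorted vectors, so the linearity above applies with no further reordering.

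With these two facts the biconditional follows at once. The forward direction ($\Rightarrow$) is immediate, since each $\mathbf e_k$ is itself a particular admissible input $\mathbf u\in[0,\infty)^n$, so specialising the hypothesis to $\mathbf u=\mathbf e_k$ gives exactly $\mathrm{OWA}_{\mathbf c}(\mathbf e_k)\le\mathrm{OWA}_{\mathbf d}(\mathbf e_k)$. For the converse ($\Leftarrow$) I would apply linearity on the sorted cone to write
\[
\mathrm{OWA}_{\mathbf d}(\mathbf u)-\mathrm{OWA}_{\mathbf c}(\mathbf u)
=\sum_{k=1}^n \lambda_k\Big(\mathrm{OWA}_{\mathbf d}(\mathbf e_k)-\mathrm{OWA}_{\mathbf c}(\mathbf e_k)\Big),
\]
and since each $\lambda_k\ge 0$ and, by assumption, each bracketed term is $\ge 0$, the whole sum is nonnegative, yielding $\mathrm{OWA}_{\mathbf c}(\mathbf u)\le\mathrm{OWA}_{\mathbf d}(\mathbf u)$.

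The one step requiring genuine care---which I would flag as the main obstacle---is justifying the identity $\mathrm{OWA}_{\mathbf c}\big(\sum_k \lambda_k \mathbf e_k\big)=\sum_k \lambda_k\,\mathrm{OWA}_{\mathbf c}(\mathbf e_k)$. This is \emph{false} for OWA on arbitrary inputs, because OWA is only comonotone-additive: the arguments of a sum are generally re-sorted before weighting. It is valid here precisely because all summands share the same nonincreasing order, so no coordinate ever overtakes another, and $\mathrm{OWA}_{\mathbf c}$ therefore acts as the plain linear functional on the relevant cone. A secondary bookkeeping detail is the normalisation denominator $\sum_{i=1}^n c_i$, but as a constant depending only on $n$ and $\mathbf c$ it does not disturb linearity.
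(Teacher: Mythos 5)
Your proof is correct and takes essentially the same route as the paper's: both rest on the observation that $\mathrm{OWA}_{\mathbf c}$ coincides with a linear functional on the cone of nonincreasingly sorted vectors, and both reduce the comparison to the step vectors $\mathbf e_k$. The paper phrases this geometrically and very tersely (the graph of OWA on the sorted subdomain is ``a fragment of a plane,'' so domination is decided at the vertices), whereas your telescoping decomposition $\mathbf u^{\downarrow}=\sum_{k}\lambda_k\,\mathbf e_k$ with $\lambda_k\ge 0$ is precisely the explicit algebraic form of that vertex argument---indeed, your write-up fills in details the paper leaves as a sketch.
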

\begin{proof}
It is sufficient to show this for $\mathbf u\in\{\mathbf x \in [0,1]: x_1 \geq x_2 \geq \ldots \geq x_n \}$ using symmetry and homogeneity of OWA functions. The graph of OWA on that subdomain is a fragment of a plane, and the inequalities at all the vertices are necessary and sufficient for one graph dominating the other.
\end{proof}

{Note that the above relationship can also be stated in terms of the cumulative sum of OWA weights, and related to the idea of stochastic dominance, i.e., for $\sum\limits_{i = 1}^n c_i = \sum\limits_{i = 1}^n d_i$ it holds that:
\begin{equation*}
\sum\limits_{i = 1}^k c_i \leq \sum\limits_{i = 1}^k d_i, \quad k = 1,\ldots,n.
\end{equation*}

}

Let us introduce notation $\mathbf{c}_{+k}=(c_{k+1}, c_{k+2},\ldots,c_{k+n+1})$.
\begin{lemma} \label{lem2}
 If a weighting vector $\mathbf c$ satisfies:
 \begin{equation} \label{eq:suf1b}
 {\sum_{i=k+1}^{k+m} c_i}{\sum_{i=1}^l c_i} \le {\sum_{i=1}^m c_i}{\sum_{i=k+1}^{k+l} c_i},
  \end{equation}
  then $\mathrm{OWA}_{\mathbf c_{+0}}(\mathbf u) \leq \mathrm{OWA}_{\mathbf c_{+k}}(\mathbf u) $
for all $\mathbf u$ and $k=1,2,\ldots$.
\end{lemma}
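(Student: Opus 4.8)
The plan is to deduce the claim directly from Lemma~\ref{lem1}: since that lemma lets us compare two OWA functions by checking only the "indicator" inputs $\mathbf{e}_l$, I would evaluate both $\mathrm{OWA}_{\mathbf{c}_{+0}}$ and $\mathrm{OWA}_{\mathbf{c}_{+k}}$ on each $\mathbf{e}_l$ and observe that the resulting vertex inequalities are nothing but the hypothesis~\eqref{eq:suf1b} rewritten with the denominators cleared.

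First I would fix the common length $m$ of the two weighting vectors $\mathbf{c}_{+0}=(c_1,\dots,c_m)$ and $\mathbf{c}_{+k}=(c_{k+1},\dots,c_{k+m})$ and apply Lemma~\ref{lem1} with $\mathbf{c}:=\mathbf{c}_{+0}$ and $\mathbf{d}:=\mathbf{c}_{+k}$. Both are admissible inputs to $\mathrm{OWA}$, which is normalised by the sum of its own weights; the denominators $\sum_{i=1}^m c_i\ge c_1=1$ and $\sum_{i=k+1}^{k+m} c_i$ are positive (the latter under the standing assumption that the relevant window of weights does not vanish, so that the right-hand OWA is defined). By Lemma~\ref{lem1}, $\mathrm{OWA}_{\mathbf{c}_{+0}}(\mathbf{u})\le\mathrm{OWA}_{\mathbf{c}_{+k}}(\mathbf{u})$ holds for \emph{every} $\mathbf{u}$ precisely when it holds for each $\mathbf{e}_l$, $l=1,\dots,m$.

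Next I would evaluate both sides on $\mathbf{e}_l$. Since $\mathbf{e}_l$ is already sorted nonincreasingly, the order statistics require no reshuffling and I obtain
\[
\mathrm{OWA}_{\mathbf{c}_{+0}}(\mathbf{e}_l)=\frac{\sum_{i=1}^{l} c_i}{\sum_{i=1}^{m} c_i},
\qquad
\mathrm{OWA}_{\mathbf{c}_{+k}}(\mathbf{e}_l)=\frac{\sum_{i=k+1}^{k+l} c_i}{\sum_{i=k+1}^{k+m} c_i}.
\]
The required vertex inequality $\mathrm{OWA}_{\mathbf{c}_{+0}}(\mathbf{e}_l)\le\mathrm{OWA}_{\mathbf{c}_{+k}}(\mathbf{e}_l)$ then reads
\[
\frac{\sum_{i=1}^{l} c_i}{\sum_{i=1}^{m} c_i}\le\frac{\sum_{i=k+1}^{k+l} c_i}{\sum_{i=k+1}^{k+m} c_i},
\]
and cross-multiplying by the positive denominators turns it into $\sum_{i=k+1}^{k+m}c_i\,\sum_{i=1}^{l}c_i\le\sum_{i=1}^{m}c_i\,\sum_{i=k+1}^{k+l}c_i$, which is exactly~\eqref{eq:suf1b}. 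Thus the hypothesis supplies every needed vertex inequality, and Lemma~\ref{lem1} promotes them to the conclusion for all $\mathbf{u}$.

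I expect no genuinely hard analytic step: all the real work—the reduction to finitely many vertices of the sorted simplex—has already been absorbed into Lemma~\ref{lem1}. The only point requiring care is the bookkeeping, namely aligning the index ranges of the partial sums of $\mathbf{c}_{+0}$ and $\mathbf{c}_{+k}$ so that the evaluated inequality coincides with~\eqref{eq:suf1b} verbatim, together with the minor caveat of keeping the normalising denominators strictly positive so that the right-hand OWA is well defined.
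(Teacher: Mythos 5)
Your proposal is correct and follows essentially the same route as the paper's own (sketched) proof: apply Lemma~\ref{lem1} with $\mathbf c_{+0}$ and $\mathbf c_{+k}$, evaluate both OWAs at the vectors $\mathbf e_l$ to get ratios of partial sums, and observe that cross-multiplication turns the vertex inequalities into exactly the hypothesis~\eqref{eq:suf1b}. The only difference is cosmetic: the paper dispatches the degenerate zero-denominator cases ``separately'' as trivial, whereas you fold them into a standing positivity assumption, which amounts to the same bookkeeping.
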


\begin{proof}[Sketch of the proof.]
Apply Lemma \ref{lem1} with $\mathbf c=\mathbf c_{+0}$ and $\mathbf d= \mathbf c_{+l}$ and note that $\mathrm{OWA}_{\mathbf c_{+k}}(\mathbf e_l) =\sum_{i=k+1}^{k+l} c_i / \sum_{i=k+1}^{k+m} c_i $. The trivial cases of the sums in \eqref{eq:suf1b} being 0 are considered separately.
\end{proof}

Another piece of notation, let
$\widetilde{\mathrm{OWA}}_{\mathbf c}(\mathbf u,\mathbf v)=\frac{1}{\sum_{i=1}^{n+m} c_i}(\sum_{i=1}^n c_i u_i + \sum_{i=n+1}^{n+m} c_i v_{i-n})$, whose difference to $\mathrm{OWA}$ is that no sorting step after concatenation takes place (it is assumed that $\mathbf u, \mathbf v$ are sorted separately).
\begin{lemma} \label{lem3}
    Let $\bar{\mathbf u}=(\bar u, \bar u,\ldots,\bar u) \in [0,\infty)^m$ and $\bar u=\mathrm{OWA}_{\mathbf c}(\mathbf u)$. Then for any $m\geq 1$:
    \begin{equation} \label{eq:lem:3}
        \widetilde{\mathrm{OWA}}_{\mathbf c}(\mathbf u,\bar{ \mathbf u})=\bar u.
    \end{equation}
\end{lemma}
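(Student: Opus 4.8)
The plan is to observe that Lemma~\ref{lem3} is a direct consequence of the definition of $\widetilde{\mathrm{OWA}}_{\mathbf c}$ together with the fact that $\bar{\mathbf u}$ is a constant vector. The essential point is that $\widetilde{\mathrm{OWA}}_{\mathbf c}$ deliberately omits the sorting step after concatenation, so the weights $c_1,\dots,c_n$ are paired with the entries of $\mathbf u$ and $c_{n+1},\dots,c_{n+m}$ with the entries of $\bar{\mathbf u}$, exactly as indexed. No rearrangement can occur, which is precisely what makes the identity hold on the nose.

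First I would unfold the definition,
\[
\widetilde{\mathrm{OWA}}_{\mathbf c}(\mathbf u,\bar{\mathbf u})
=\frac{1}{\sum_{i=1}^{n+m} c_i}\Bigl(\sum_{i=1}^n c_i u_i + \sum_{i=n+1}^{n+m} c_i\,\bar u\Bigr),
\]
using that every entry of $\bar{\mathbf u}$ equals $\bar u$. Next I would factor the constant $\bar u$ out of the second sum and, crucially, substitute the definition $\bar u=\mathrm{OWA}_{\mathbf c}(\mathbf u)=\bigl(\sum_{i=1}^n c_i u_i\bigr)/\bigl(\sum_{i=1}^n c_i\bigr)$ to rewrite the first sum as $\sum_{i=1}^n c_i u_i=\bar u\sum_{i=1}^n c_i$. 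Here I rely on $\mathbf u$ being already sorted, so that $u_{(i)}=u_i$ and the OWA of $\mathbf u$ is exactly this weighted average. The numerator then collapses to $\bar u\bigl(\sum_{i=1}^n c_i+\sum_{i=n+1}^{n+m} c_i\bigr)=\bar u\sum_{i=1}^{n+m} c_i$, and dividing by the normaliser $\sum_{i=1}^{n+m} c_i$ leaves $\bar u$, as claimed.

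There is essentially no obstacle to overcome: the statement is an identity that follows by a one-line algebraic manipulation once the definitions are written out. The only subtlety worth flagging is the role of the tilde; had we used the genuine $\mathrm{OWA}_{\mathbf c}$ (with re-sorting) instead of $\widetilde{\mathrm{OWA}}_{\mathbf c}$, the interleaving of the constant block $\bar{\mathbf u}$ among the entries of $\mathbf u$ would change which weights multiply which values, and the clean cancellation would break. Thus the proof amounts to verifying that the indexing lines up, which the no-sort convention guarantees for every $m\ge 1$.
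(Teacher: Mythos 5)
Your proof is correct and takes essentially the same route as the paper: both unfold the definition of $\widetilde{\mathrm{OWA}}_{\mathbf c}(\mathbf u,\bar{\mathbf u})$ and exploit $\sum_{i=1}^n c_i u_i = \bar u \sum_{i=1}^n c_i$ — the paper merely packages this identical algebra as a convex combination $\alpha\bar u + (1-\alpha)\bar u$, invoking idempotency of the OWA on the constant block. If anything, your direct computation is cleaner (the paper's stated decomposition contains some index slips, writing $\mathrm{OWA}_{\mathbf c_{+m}}(\mathbf u)$ where $\mathrm{OWA}_{\mathbf c_{+n}}(\bar{\mathbf u})$ is meant), and you rightly flag that the no-sorting convention and the assumption that $\mathbf u$ is sorted are what make the weight--value pairing line up.
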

\begin{proof}
{By definition, $\widetilde{\mathrm{OWA}}_{\mathbf c}(\boldsymbol u,\bar{ \boldsymbol u}) = \alpha \cdot \mathrm{OWA}_{\boldsymbol c}(\boldsymbol u) + (1 - \alpha)\cdot \mathrm{OWA}_{\boldsymbol c_{+m}}(\boldsymbol u) $, with $\alpha = \sum_{i = 1}^m c_i / \sum_{i = 1}^{m+n} c_i$.  Hence, due to idempotency of the OWA and since we are taking a convex combination, we have:
$$
\widetilde{\mathrm{OWA}}_{\mathbf c}(\boldsymbol u,{ \bar{\boldsymbol u}}) =
\alpha \cdot \bar u + (1-\alpha) \bar u = \bar u.
$$
}

\end{proof}

We can now formulate the proof of the above theorem.

\begin{proof}[Proof of Theorem \ref{thm:suf}]
Assume without loss in generality that in \eqref{eq:owa1ge},
$\mathrm{OWA}_{\boldsymbol{c}}(\mathbf u)=
\mathrm{OWA}_{\boldsymbol{c}}(\mathbf v) = \bar u$, as any increase in, say, $\mathbf v$ will lead to an increase on the left but not on the right, as well as $u_1\ge v_1$.
Then:
\begin{align*}
    \bar u=& \widetilde{\mathrm{OWA}}_{\mathbf c}(\mathbf u,\bar{ \mathbf u}) \quad \mbox{(Lemma \ref{lem3})}\\
    \le &\widetilde{\mathrm{OWA}}_{\mathbf c}(\mathbf u,\mathbf v)\quad \mbox{(Lemma \ref{lem2})}\\
    \le &{\mathrm{OWA}}_{\mathbf c}(\mathbf u,{ \mathbf v}) \quad \mbox{(decreasing weights starting from the second one)}.
\end{align*}
Note that $v_1$ will not end up in the first position in the sorted list $(\mathbf u, \mathbf v)$, hence even if $c_1<c_{n+1}$, it does not negate  the last inequality.
\end{proof}

Note that, as long as there are no zeroes in the denominators, we can write our sufficient condition as:
\begin{equation} \label{eq:suf2}
 \frac{\sum_{i=1}^l c_i} {\sum_{i=1}^m c_i} \le \frac{\sum_{i=k+1}^{k+l} c_i}{\sum_{i=k+1}^{k+m} c_i}
\end{equation}

\noindent
for all $k>1$, $m>l$. {This amounts to all effective weighting vectors obtained from the first $m$ arguments of $\boldsymbol c$ being stochastically dominated by all weighting vectors of the same length starting from a different index.}

Note two differences to the necessary conditions in Theorem \ref{thm:nec}.
First, there is a condition attached to \eqref{eq:nec1}. Second, we have $k<n$. Thus, the condition in \eqref{eq:suf2} is unfortunately stronger than the necessary conditions previously established.
We can weaken the sufficient conditions \eqref{eq:suf2} by modifying Theorem \ref{thm:suf} to:
\begin{equation} \label{eq:suf3}
 \frac{\sum_{i=1}^l c_i} {\sum_{i=1}^m c_i} \le \frac{c_k+\sum_{i=n+2}^{n+l} c_i}{\sum_{i=n+1}^{n+m} c_i},
\end{equation}

\noindent
for some $k<n$ (owing to the fact that $v_1$ will be positioned ahead of some $u_i, i=k,\ldots,n$ in the sorted list $(\mathbf u, \mathbf v)$).

\begin{example}\label{ex:maxandav}
It is easily seen that $\boldsymbol{c}=(1,0,0,\dots)$
and
$\boldsymbol{c}=(1,1,1,\dots)$,
corresponding to complete and average linkage,
respectively, fulfil the conditions
in Theorem~\ref{thm:suf}.
\end{example}

\begin{example}\label{ex:evecs}
All sequences of the form $\boldsymbol c = \boldsymbol e_k$ satisfy \eqref{eq:suf2}.
\end{example}

It is worthy to note that, in line with the condition that $\boldsymbol c$ need only be decreasing from $c_2$ onwards, sequences such as $(1,2,1,1,0,0,\ldots,0)$ and $(1,2,2,1,0,\ldots,0)$ can be verified as satisfying the sufficient condition.  One can also see that $(0,1,0,0,\ldots,0)$ will satisfy \eqref{eq:owa1ge}.  It can either be viewed separately from the framework that fixes $c_1 = 1$ or as a limiting case ($c_2 \to \infty)$.  In other words, general decreasingness is not a requirement on $\boldsymbol c$.

\medskip
On the other hand, we observe that a fairly simple rule that will ensure satisfaction of \eqref{eq:suf2} is for ratios between sequential values to be increasing, i.e., $c_i/c_{i+1} \leq c_{i+1}/c_{i+2}$. However, this is stronger than the sufficient condition.  While Examples \ref{ex:maxandav} and \ref{ex:evecs} above adhere to this rule, the following example shows that decreasing ratios also cannot be framed as a necessary requirement.

\begin{example}
A sequence that satisfies Theorem \ref{thm:suf}  is $\boldsymbol{c} = (1,1/2,1/5, 7/75,0,0,\ldots)$, however this does not have an increasing sequence of ratios, i.e. we have $c_1/c_2 = 2, c_2/c_3 = 5/2,$ however $c_3/c_4 = 15/7 < 5/2$.
\end{example}

Thus, the degree of the allowed tightening of the necessary and loosening
of the sufficient conditions is still an open problem.

\section{Conclusion and Future Work}\label{sec:conclusion}

OWA-based linkages were proposed in \cite{Yager2000:OWAlinkage}
and were reinvented in \cite{NasibovKandemirCavas2011:OWAlinkage}.
In \cite{CenaGagolewski2020:genieowa}, the practical usefulness of OWA-based
clustering was evaluated thoroughly on numerous benchmark datasets from
the suite described in \cite{Gagolewski2022:clustering-benchmarks}.
It was noted that adding the Genie correction for cluster size inequality
\cite{GagolewskiETAL2016:genie} leads to high-quality partitions,
especially based on linkages that rely on a few closest point pairs
(e.g., the single linkages and fuzzified/smoothened minimum).
These papers provide many examples of practically useful OWA weight generators.

In this paper, we have presented some previously missing theoretical
results concerning the OWA-based linkages. First, we have shown that the OWA-based
linkages and the Lance--Williams formula
only have three instances in common: the single, average, and complete linkages.
Both classes enable a very fast (linear-time) update between iterations
and thus are of potential practical interest.

Then, we gave some necessary and sufficient conditions for
the coefficient generating schemes to guarantee the resulting
dendrograms being free from unaesthetic inversions
(note that the mentioned Genie correction might additionally
introduce inversions by itself).
How to tighten these condition sets in the form of ``if and only if''
statements is still an open problem: follow-up research is welcome.

In the future, we suggest considering similar results concerning
different weighting triangle generating schemes,
e.g.,
$w_{i, z}=w\left(\frac{i}{z}\right)-{w}\left(\frac{i-1}{z}\right)$,
where $w:[0,1]\to[0,1]$ is a monotone bijection
(compare \cite{Yager1988:owa}).

Furthermore, in \cite{CenaGagolewski2020:genieowa}, a generalised,
three-stage OWA linkage scheme was introduced.
There are also generalisations of the Lance--Williams formula,
e.g., \cite{GanMaWu2007:dataclustering}. Inspecting the relationships
between them could also be conveyed.

\section*{Acknowledgments}
\noindent
This research was supported by the Australian Research Council Discovery
Project ARC DP210100227 (MG, GB, SJ).

\section*{Conflict of interest}

All authors certify that they have no affiliations with nor involvement in any
organisation or entity with any financial interest or non-financial interest
in the subject matter or materials discussed in this manuscript.

\clearpage

\printcredits  %

\end{document}